\pgfplotsset{compat=1.18}
\newtheorem{theorem}{Theorem}
\newtheorem{lemma}{Lemma}
\newtheorem{definition}{Definition}
\newtheorem{proof}{Proof}
\journal{Computers and Electrical Engineering}
\begin{document}

\begin{frontmatter}


\title{UAV Path Planning for Object Observation with Quality Constraints: A Dynamic Programming Approach\\
}


\author[seu]{Jiawei Wang}
\author[seu]{Weiwei Wu}
\author[ams]{Yijing Wang}
\author[seu]{Yan Lyu}
\author[seu]{Vincent Chau}
\affiliation[seu]{organization={Southeast University, Department of Computer Science and Engineering},
            city={Nanjing},
            country={China}}

\affiliation[ams]{organization={Consulting Center for Strategic Assessment, Academy of Military Science},
            city={Beijing},
            country={China}}

\begin{abstract}
This paper addresses a UAV path planning task that seeks to observe a set of objects while satisfying the observation quality constraint. A dynamic programming algorithm is proposed that enables the UAV to observe the target objects with the shortest path while subjecting to the observation quality constraint. The objects have their own facing direction and restricted observation range. With an observing order, the algorithm achieves $(1+\epsilon)$-approximation ratio in theory and runs in polynomial time. The extensive results demonstrate that the algorithm produces near-optimal solutions, the effectiveness of which is also tested and proved in the Airsim simulator, a realistic virtual environment.
\end{abstract}

\begin{keyword}


multi-objective optimization \sep approximation algorithms \sep observation quality \sep path planning
\end{keyword}

\end{frontmatter}

\section{Introduction}

With recent advancements in microchip and sensor technology, Unmanned Aerial Vehicles (UAVs) have become increasingly popular due to their diverse applications, including structure inspection, smart farming, wildfire detection, cinematography, etc. As an intelligent and integrated platform, UAVs can perform difficult or dangerous tasks for humans. For instance, an UAV can fly around a building and meticulously reconstruct it using photos taken along a pre-planned path, which is challenging for human operators to accomplish~\cite{uavstructureinspection}. UAVs equipped with specific devices can automate farmland irrigation in large areas~\cite{smart-farm}, hence liberating farmers from laborious work. Moreover, UAVs can perform missions in disaster-stricken areas~\cite{fire-detect} and capture stunning aerial photography from locations that are inaccessible to humans~\cite{uav-cinema}.

Benefiting from their flexibility and maneuverability in practice, UAVs generally require a trajectory to accomplish various tasks during flight. Coverage path planning algorithms are specifically designed to determine feasible routes for UAVs to follow, covering either a specific area or a set of Points of Interest (PoI), while adhering to certain conditions, such as distance constraints or hardware parameters, such as camera angles. Distance constraints refer to the UAV's perception range, supported by cameras or radars, which require the target object to be located within a certain distance of the UAV. A wide range of algorithms has been proposed in the literature to enable UAVs to cover a specific space, whether by planning feasible routes to cover PoIs (Points of Interest) or by aggregating the area swept over during flight.

This study delves into a realistic scenario where a UAV must observe a set of objects while meeting a gross observation quality constraint, distinguishing it from a point cover problem. The problem can be extended to building inspection, where a path guides the UAV to capture photos of each side of the building and, meanwhile, considers the quality of the photos. However, manual planning of a route that is as short as possible can be challenging. To the best of our knowledge, this work stands out as the first to propose a solution with observation quality and an approximation guarantee for UAV path planning, distinct from other literature that seldom considers the coverage quality and provides no theoretical bound on the path length. Our proposed solution improves the efficiency of UAV path planning, benefiting various industries and applications.

\section{Related Work} 
\label{related-work}

As this paper plans a path for a UAV in covering problems, we present related works in three parts. The first part is devoted to different methods of UAV path planning. The focus is then shifted to the UAV covering problem, in which the main task is to find a path that achieves covering effects. Finally, we briefly introduce several classical Traveling Salesman Problem (TSP) algorithms. 

\subsection{UAV Path Planning}
Path planning is critical to unmanned aerial vehicle control, enabling UAVs to autonomously navigate along predetermined paths. Numerous methods have been proposed in literature, categorized into sampling-based algorithms, mathematical model-based algorithms, bio-inspired algorithms, and artificial intelligence algorithms.

Sampling-based algorithms require prior knowledge of the map and divide it into a set of nodes. The concrete techniques used in these algorithms include Rapid-Exploring Random Trees (RRT), RRT-star, and A-star. Mathematical methods primarily employed Lyapunov functions~\cite{Lyapunov} to maintain the stability of UAVs, linear programming to integrate all cost factors with Hamiltonian function to search for an optimal path, as described in \cite{math-planning}, and Bezier curves, which further consider the smoothness of the flight path. Bio-inspired techniques~\cite{bio-planning} utilized evolutionary ideas, selecting a route as a parent path and generating new paths through mutation and crossover. The selection process is guided by the adaptive value of the offspring.

AI-based methods have become a focal point of research in recent years, and these methods can be further classified into those based on traditional machine learning models, deep learning, and reinforcement learning. Supervised machine learning includes methods such as Gaussian filtering and Kalman filtering, which involve cleaning the data collected from the environment, inputting it into the model, and obtaining the predicted path and data visualization. Kang et al. in~\cite{Kalman-filter1} and the authors in~\cite{Kalman-filter2} used Kalman filters to clean noisy images collected by UAVs in complex environments, and obtained a safe flying space by outputting collision probabilities. Due to Kalman filters' limitations in recognizing numerous obstacles, however, the authors in ~\cite{Gaussian-filter} used Gaussian filters to learn the impact of UAV maneuvers and more accurately estimate the UAV's state. Unsupervised machine learning, such as the k-means method~\cite{k-means-planning}, determined the route by clustering the target points in a multi-task environment.

Computer vision is a representative direction of deep learning applied to UAV path planning. Similar to the visual technology used in autonomous driving cars, UAVs use cameras to capture real-time images and identify possible obstacles and safe flying spaces~\cite{NN-planning1}, ~\cite{NN-planning2}. In addition to deep learning, Q-learning in reinforcement learning was also used for path planning in~\cite{q-learning-planning}, and Luan et al. in~\cite{g-learning-planning} used G-learning to iteratively optimize the path by calculating the cost function in real time. Lei et al. in~\cite{DRL} modeled the UAV navigation problem as a Markov decision process and proposed a model interpretation method based on feature attributes to explain the behavior of the UAV in the path planning process. Similarly, Xie in ~\cite{DRL2} expressed path planning as a partially observable Markov process, constructing RNNs to tackle partially observable problems, and leveraging reward value and action value to reduce meaningless exploration. These two deep reinforcement learning methods significantly enhanced the learning efficiency of Q networks and G networks.

\subsection{UAV Covering}
As a subproblem of UAV path planning, UAV covering additionally requires achieving a cover task, as indicated in the name. One type is regional covering that assumes a region is covered when it falls within the perception area of the UAV. A relatively conventional algorithm is presented in~\cite{cellular} where Nam et al. utilized approximate cellular decomposition with criteria for both length and number of turns on the route. Yao et al.~\cite{river-rescue} proposed an offline route planning method based on Gaussian mixed model and heuristic prioritization, which was aimed at maximizing the probability of finding the lost target in a river rescue mission. Xie et al.~\cite{multi-polygon} solved the problem of multiple polygon regions by integrating covering a single polygon and traveling salesman problem. These are the methods for one UAV and algorithms built on multiple UAVs are as follows. Jing et al.~\cite{uavstructureinspection} navigated the UAVs to inspect a large complex building structure by combining the set covering problem and the vehicle routing problem. Rapidly exploring random tree, as indicated by its name, is broadly used to explore an unknown region with high efficiency. In~\cite{surveillance}, a cooperative surveillance task was achieved by multiple UAVs where RRT was modified to find feasible trajectories passing suitable observing locations, on which particle swarm optimization was then performed. Focusing on the coverage path planning of heterogeneous UAVs, the authors in \cite{hetero-cover} established the UAVs and regional model before using linear programming to accurately provide the best point-to-point flight path for each UAV. Then, inspired by the foraging behavior of ants, a heuristic algorithm based on ACS was proposed to search for the approximate optimal solution and to minimize the time consumption of tasks in the cooperative search system.

The other type is object-oriented where the problem aims to cover a set of objects, with the condition satisfied when the distance between the UAV and an object is less than a threshold. In this case, effective coverage can be treated as a binary in that 1 for covered and 0 for not covered, which can find many applications in the context of wireless sensor networks. Huang et al.~\cite{seuwsn} embedded turning angles and switching numbers during retrieving data into graph structure and obtained a path through the use of Generalized TSP solutions. Authors in~\cite{straightwsn} completed the data collection mission in a straight line situation, minimizing the UAV's total flight time via dynamic programming and meanwhile retrieving a certain amount of data each sensor. After determining the possible flying path, Yang et al.~\cite{bio-inspired} combined a genetic algorithm and ant colony optimization to select the optimal path for data collection. In~\cite{robotcover}, the agents viewed moving objects and obstacles as disks of different sizes and the goal was to find a collision-free coverage path in a dynamic changing environment while ensuring smoothness. Concerning multiple UAVs, Alejo et al.~\cite{wsn} integrated online RRT with genetic algorithms, guiding several UAVs collecting data simultaneously from sensors randomly distributed. A multi-agent architecture designed in~\cite{multi-agent-catastrophe} allowed UAVs to patrol in a region and monitor key ground facilities. 

\subsubsection{Traveling Salesman Problem (TSP)}
The Traveling Salesman Problem and its variants are thoroughly studied in literature. The problem is to find a route that starts and ends at the same place, such that a set of places need to be visited once. The authors in~\cite{DFJ-tsp, MTZ-tsp, GP-tsp} designed different forms of integer linear programming for the problem. Christofides~\cite{1.5TSP} obtained a 3/2 approximation by combining a minimum spanning tree of the original graph and the best matching of the vertices with odd degrees. Nearest neighbor algorithm~\cite{nearest-tsp} constructed the path by continuously adding the new vertex closest to the current one. Cheapest Insertion~\cite{cheapest-insertion} enlarged the route via inserting new node to the path with the lowest insertion cost and its approximation ratio is 2. Holland~\cite{genetic-tsp} employed genetic algorithms, nurturing route offspring with best adaptive score. Note that the TSP is a NP-hard problem, and therefore we are interested in finding approximate solutions in polynomial time.

\subsection*{Contributions}

The literature summarized above admittedly achieved satisfying outcomes in different circumstances, yet few of them truly yield theoretical results, suggesting the degree to which they are efficient relative to the optimal solutions. In our work, however, a result with an approximation ratio is obtained.

Our contributions are summarized as follows:
\begin{enumerate}

\item A Dynamic Programming based algorithm is designed which can solve the approximate shortest path given the quality constraint. 
\item An Integer Linear Programming (ILP) is designed which can compute the near-optimal solution via a solver with respect to epsilon.
\item It is theoretically guaranteed that the algorithm runs in polynomial time and achieves $1+\epsilon$ approximation given a visiting order.
\item We carry out various numerical and simulation experiments and the results show that our method produces near optimal solution and makes significant improvement on the observation quality.
\end{enumerate}

The structure of the rest of the paper is organized as follows. 
In Section \ref{sec: system-model}, the system model is presented followed by discretization and the formalization of the problem. The detail of our methods is elaborated on in Section \ref{sec: methodology} including the design of dynamic programming algorithm and determination of the observation orders. The process and results of the experiment are presented in Section \ref{sec: experiments}.

\section{System Model and Problem Formulation}
\label{sec: system-model}
In this section, we first present the description of the system model, followed by the formalization of the goal of the problem, which aims to find a route for the UAV that minimizes the total length of the flying route while ensuring the observation quality constraint and that all the objects have been observed. After that, we describe the process to generate the observation points via area discretization and finally present the integer linear programming form of the problem. 

\subsection{System Model}

Consider a two-dimensional space in which there are $n$ objects and a UAV equipped with a camera that can move freely in this space. The objects set is referred to as $\mathcal{O}= \{o_1, o_2, o_3, ..., o_n\}$. Each object $o_i$ faces a direction $\overrightarrow{d_{o_i}}$, the location of which is described by coordinates $(x_{o_i}, y_{o_i})$. In order to ensure the observation quality, the UAV is not allowed to take photos from an over-deviated orientation, which is supported by the fact that the most information is obtained when the camera is facing directly against an object. Moreover, since distance also affects the observing quality, the UAV cannot observe the object if the distance exceeds a threshold. Concretely, the UAV has to be located within an angle range around the object's direction in order to observe it, which is denoted as $\theta$. Besides, the maximum observation distance is defined to be $d_{max}$. To further ensure safety, the UAV cannot approach too close to the object, so the minimum distance is set to $d_{min}$. As illustrated in Figure~\ref{img:efficient_observation}, the UAV can efficiently observe an object if and only if it satisfies Definition~\ref{def:efficient_observe}. Dashed blue lines indicate the effective area. We have adapted this definition~\cite{deploy}  to underscore the emphasis on the quality of observation in our problem.

We define the observation quality of the object $o_i$ as $q_i$, thus the gross quality of all objects $Q = \sum_{i} q_i$. There is also an observation quality constraint $q^*$ that the gross quality of all objects $Q$ has to attain. In formula, $Q \geq q^*$. Formula~(\ref{formula:q_i}) is presented to calculate $q_i$ drawn from ~\cite{deploy}, which is short of $q(o_i, p_j)$, the observation quality of $o_i$ from the point $p_j$. Suppose that there exist maximum and minimum observation quality $q_{max}$, $q_{min}$ for a single object, which are decided by the observation area. To make the problem solvable, we introduce an additional assumption: $n*q_{min} \leq q^* \leq n*q_{max}$.

\begin{definition}[Efficient observation]\label{def:efficient_observe} 
An object $o_i$ is efficiently observed by the UAV $p_j$ if and only if $d_{min} \leq ||{o_i}{p_j}|| \leq d_{max}$, $\alpha(\overrightarrow{d_{o_i}}, \overrightarrow{o_i p_j }) \leq \theta$, where $\alpha( , )$ the angle between two vectors.
\end{definition}

We define the observation quality of the object $o_i$ as $q_i$, thus the gross quality of all objects $Q = \sum_{i} q_i$. There is also an observation quality constraint $q^*$ that the gross quality of all objects $Q$ has to attain. In formula, $Q \geq q^*$. Formula~(\ref{formula:q_i}) is presented to calculate $q_i$ drawn from ~\cite{deploy}, which is short of $q(o_i, p_j)$, the observation quality of $o_i$ from the point $p_j$. Suppose that there exist maximum and minimum observation quality $q_{max}$, $q_{min}$ for a single object, which are decided by the observation area. To make the problem solvable, we introduce an additional assumption: $n*q_{min} \leq q^* \leq n*q_{max}$.

\begin{equation}\label{formula:q_i}
q_i=\left\{\begin{aligned}
&\frac{a}{(||o_ip_j||+b)^2}cos(\alpha(\overrightarrow{d_{o_i}}, \overrightarrow{o_i p_j })),\\
&\ \ \ \ \mbox{if}\  d_{min} \leq ||o_ip_j|| \leq d_{max} \mbox{ and }  \alpha(\overrightarrow{d_{o_i}}, \overrightarrow{o_i p_j }) \leq \theta \\
&0, \ otherwise.
\end{aligned}
\right.
\end{equation}

\begin{figure}[htb]
\centerline{\includegraphics[width=0.4\textwidth]{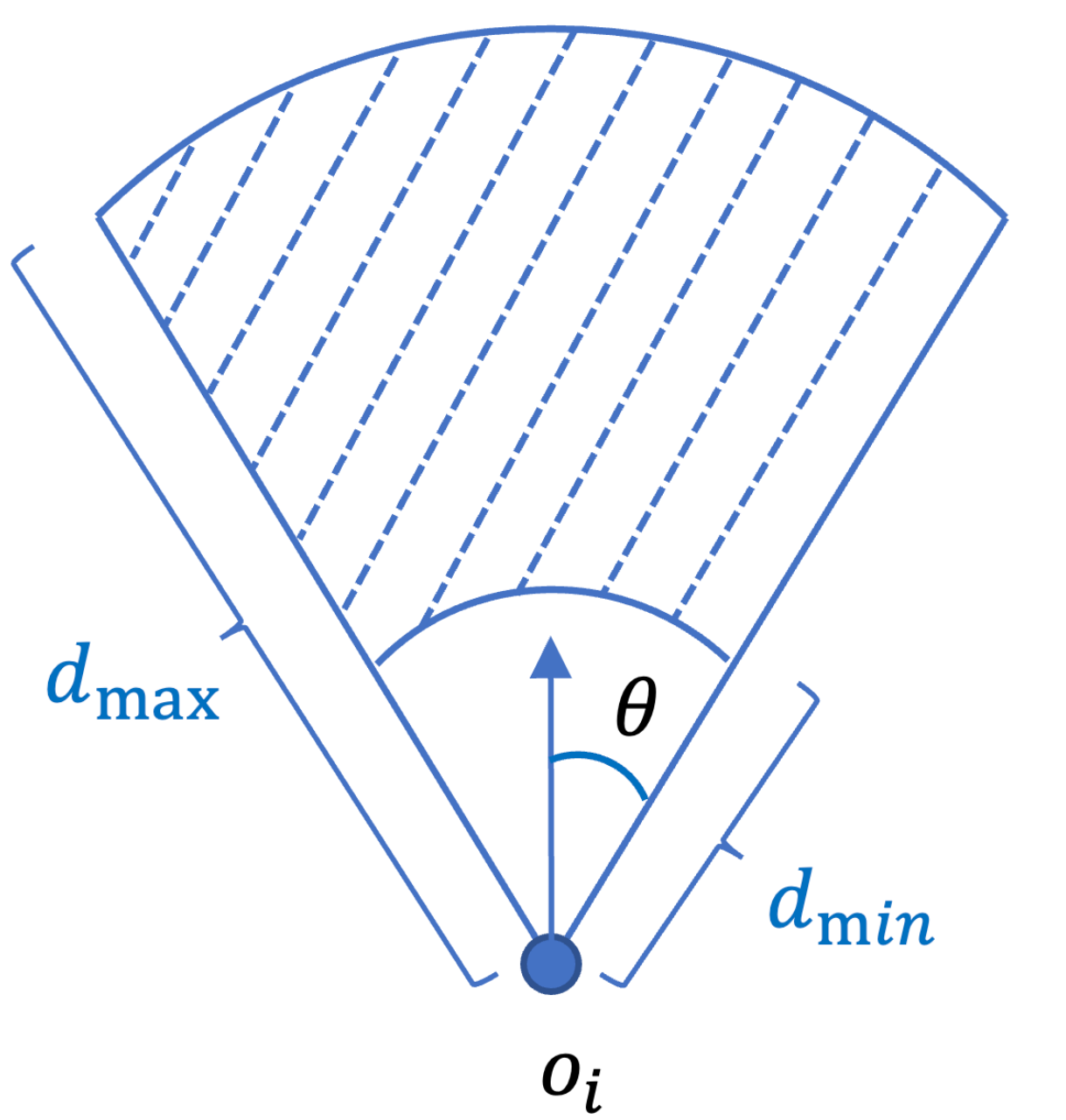}}
\caption{Efficient observation. The UAV cannot observe an object from beyond $d_{max}$, or closer than $d_{min}$, or deviated over an angle $\theta$.}
\label{img:efficient_observation}
\end{figure}

Given that a single UAV is tasked with covering all specified objects from a starting point $P_0$, the objective is to minimize the path length while adhering to quality requirements. However, this poses a challenge as the problem cannot be solved directly through a Traveling Salesman Problem approach, which does not account for the unique characteristics of the observation points and their associated qualities. In certain cases, there may be instances where observation areas of multiple objects overlap, allowing the UAV to capture images of multiple objects by rotating its camera at certain points. Due to the arbitrary nature of the object layout, devising a comprehensive solution that selects suitable observation points and plans the path remains a challenging task.

\subsection{Discretization of the observation points} \label{sec:discrete_points}

Provided that the objects are located in a continuous space, a method is first designed to generate discrete observation points which enables the UAV to practically plan its flying path in the finite space. It is inevitable that discretization brings about $(1+\epsilon)$ approximation error~\cite{approximation-algorithm}. We apply the method in~\cite{deploy} to our problem and it is briefly summarized below. Figure~\ref{img:discrete-area} shows a possible discretization result of a target area.

It is intended to divide the whole effective observation area into several independent segments so that the observation quality in each segment is considered the same. Therefore, it only needs to choose the upper left corner point of a segment to represent it. The method begins by dividing the area in the distance domain from the object into $K_1$ rings, with end points referred to as $l(0), l(1), ..., l(K_1)$. Each ring is further separated into $K_2$ segments on both sides of $\overrightarrow{d_{o_i}}$, with endpoints in the angle domain defined as $a(0), ..., a(K_2)$. In Figure~\ref{img:discrete-area}, there are $K_1 =3$ rings and there are $K_2 = \{ 1, 2, 3\}$ respectively for each sector ring. A point, for example, can be expressed by $(l(2), \alpha(1))$. With this method, it is ensured that the error of observation quality is associated with $\epsilon$. Apart from this, we make the following revision since the granularity may not be fine enough for the same approximation error of path length.

Let $A_1 = l(k_1+1)-l(k_1)$ refer to the length of a segment, and $B_1 = \alpha(k_2+1)-\alpha(k_2)$ refer to the angular width of the segment. Let $D$ denote the maximum distance between two objects from the set $\mathcal{O}$. Formally, $D=\max _{i, j \in |\mathcal{O}|} d\left(o_i, o_j\right)$. We use Equation~(\ref{formula:mesh}) to determine the size of a segment and ensure the approximation ratio, with $n$ being the number of objects.

\begin{equation}
    A = min\{A_1, A_2\}
\end{equation}

\begin{equation}
    B = min\{B_1, B_2\}
\end{equation}

\begin{equation}
    \delta = A_2 =B_2 = \frac{\epsilon * D}{n} \label{formula:mesh}
\end{equation}

\begin{figure}[htbp]
\centerline{\includegraphics[width=0.4\textwidth]{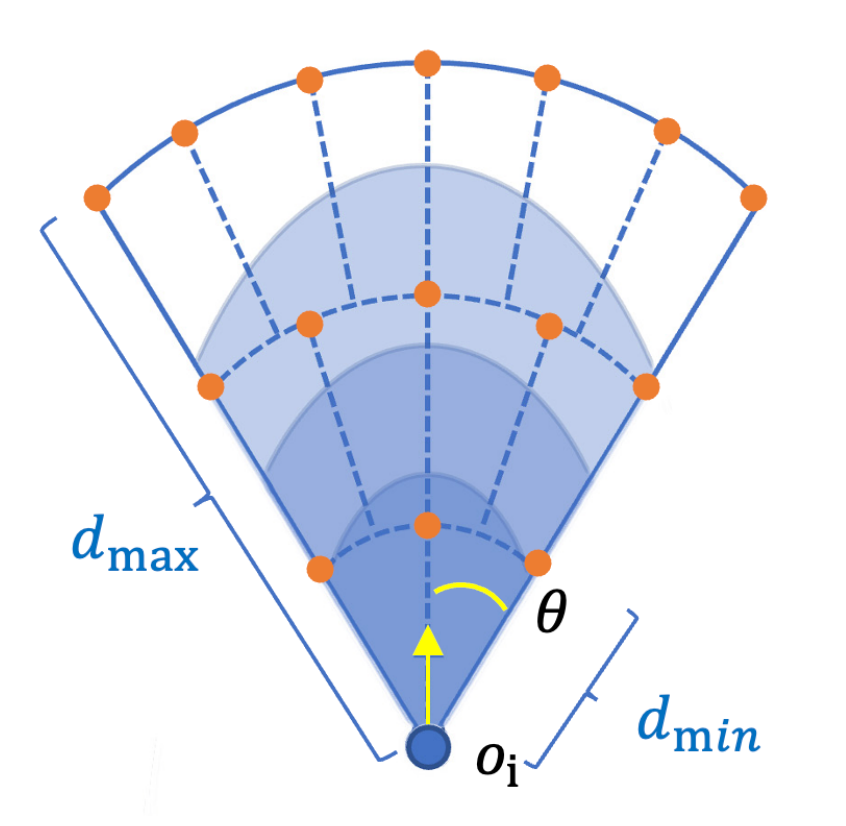}}
\caption{Discretization result. The observation quality within each segment is considered equal. $\theta$ is the observing angle range around $o_i$.}
\label{img:discrete-area}
\end{figure}

The two proposed equations are then used to generate all feasible segments and observation points in the space. The process is described in Algorithm~\ref{algo:generate_observation_points}. For convenience, we suppose there is an object $o_0$ centered at the original point and facing the positive x-axis direction ($\overrightarrow{d_{o_0}}=0$). The discretization method is then applied to $o_0$ that produces the observation point set $P_*$. For each object $o_i$, its corresponding observation points $P_i$ can be obtained by rotating all the points in $P_*$ to the direction of $o_i$ before translating to $(x_{o_i}, y_{o_i})$. The aggregation of $P_i$ is $\mathcal{P}$ that contains $\{p_1,\, p_2,\, p_3,\, ... ,\, p_u\} $ and the coordinate of the $i^{th}$ point is $(x_{p_i},\, y_{p_i})$. The observation points of an object are illustrated in Figure~\ref{img:discrete-area}, in which the orange dots are the observation points and the magnitude of observation quality can be indicated by the different blue color.

\begin{algorithm}[ht]
  \caption{Generation of observation points} 
    \label{algo:generate_observation_points}
    \begin{algorithmic}[1]
      \Require
        Object set $\mathcal{O}$
      \Ensure
        $\mathcal{P}$: Observation points 
        \State $\mathcal{P} \gets \emptyset$
        \State Given an object $o_0$ with $\overrightarrow{d_{o}}=0$ and coordinates $(0, 0)$, generate observation points set $P_*$ via discretization process. 
        \For{object $o_i$ in $\mathcal{O}$}
            \State Rotate $P_*$ to direction $\overrightarrow{d_{o_i}}$ and translate to $(x_{o_i}, y_{o_i})$. 
            \State Define the result as $P_i$.
            \State $\mathcal{P} \gets \mathcal{P} \cup P_i$.
        \EndFor
        \State return $\mathcal{P}$
    \end{algorithmic}
\end{algorithm}

\begin{lemma}\label{lemma:mesh}
The gross error of the discretization process does not exceed $\epsilon*D$.
\end{lemma}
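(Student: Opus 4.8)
The claim I read off the statement is that replacing the continuous observation space by the finite set $\mathcal{P}$ produced by Algorithm~\ref{algo:generate_observation_points} increases the length of an optimal feasible tour by at most $\epsilon D$. The plan is a standard ``snap to the grid'' argument. First I would fix an optimal \emph{continuous} solution: a start at $P_0$, a visiting order, and observation points $p_1^{*},\dots,p_n^{*}$ where $p_i^{*}$ efficiently observes $o_i$ (Definition~\ref{def:efficient_observe}) and the collection meets $Q \ge q^*$. Since the discretization of the effective region of $o_i$ partitions it into segments, each $p_i^{*}$ lies in a unique segment $S_i$; let $\hat p_i \in \mathcal{P}$ be the representative (upper-left) corner of $S_i$. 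The discrete path $P_0 \to \hat p_1 \to \cdots \to \hat p_n$ is the candidate I would compare against the continuous optimum.

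The geometric core is a bound on the Euclidean diameter of a single segment $S_i$. By $A=\min\{A_1,A_2\}$, $B=\min\{B_1,B_2\}$ and Equation~(\ref{formula:mesh}), the radial extent of $S_i$ is at most $A \le A_2 = \delta = \epsilon D / n$, and its angular extent is at most $B \le \delta$; converting the angular width into an arc/chord length at the largest admissible radius (using $d_{min}\le\|o_ip_j\|\le d_{max}$ and $d_{max}\le D$) shows that any two points of $S_i$ lie within distance at most $c\,\delta$ of each other for an absolute constant $c$. In particular $\|p_i^{*}-\hat p_i\| \le c\,\delta$. Along the way I would note that the snapped solution is still feasible: Algorithm~\ref{algo:generate_observation_points} only emits points of the effective area, so $\hat p_i$ efficiently observes $o_i$; and because the mesh is taken as the minimum of the length-driven size $\delta$ and the quality-driven size inherited from~\cite{deploy}, the observation quality is (up to the intended $\epsilon$-tolerance) constant within each segment, so $Q$ is preserved and the constraint $Q \ge q^*$ still holds.

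Finally I would compare the two tours leg by leg. Each object's observation point is an endpoint of its incident edges and has moved by at most $c\,\delta$, so by the triangle inequality each object contributes at most $O(\delta)$ to the change in total length, and summing over the $n$ objects gives an increase of at most $O(n\delta) = O(\epsilon D)$; with the mesh of Equation~(\ref{formula:mesh}) this is $n\delta = \epsilon D$ (any constant picked up in the segment-diameter estimate is absorbed by rescaling $\epsilon$). Hence the best discrete tour is no longer than the continuous optimum plus $\epsilon D$, which is the asserted gross error. The step I expect to be the main obstacle is precisely the segment-diameter estimate: the radial side is immediate from $A\le\delta$, but the angular side mixes an angular width with a length scale, so one must carefully bound the chord of a $B$-wide sector of radius at most $d_{max}\le D$ by a constant multiple of $\delta$; the subsequent triangle-inequality summation and the feasibility remark are routine once that is in place, and together with the quality discretization of~\cite{deploy} this is what yields the overall $(1+\epsilon)$ guarantee advertised in the introduction.
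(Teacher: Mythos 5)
Your proposal is correct and follows essentially the same route as the paper: snap each chosen continuous observation point to its mesh representative, bound the per-point displacement by (a constant times) $\delta$, sum over the at most $n$ chosen points to get $n\delta=\epsilon D$, and use $D$ as a lower bound on the optimal tour length to convert this into a relative error of $\epsilon$. You are in fact more careful than the paper's own one-paragraph argument, which simply asserts a per-point error of $\delta$ without addressing the angular-width-to-chord conversion or the factor of two arising from each observation point being an endpoint of two tour edges.
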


\begin{proof}
Given that observation points are utilized for approximating the points in a 2D continuous space, each point's approximation error is limited to at most $\delta$ when rounding it to the nearest mesh point. Additionally, the total error is restricted to $n*\delta$, as no more than $n$ observation points are chosen in the algorithm, which will be elaborated later. It can be anticipated that the total error does not surpass $\epsilon$ times the lower bound of the optimal solution, ensuring that the actual relative error is at most $\epsilon$. As the UAV needs to travel at least this distance to observe two objects efficiently, and the situation with only one object is not considered, it is evident that $D$ fulfills the requirement. Thus, Lemma~\ref{lemma:mesh} is established. By setting $n*\delta = \epsilon*D$, Equation~(\ref{formula:mesh}) is derived.
\end{proof}

\subsection{Integer Linear Programming representation}
\label{sec:ILP}
In this section, we present the problem in an integer linear programming (ILP) form.

Based on the discrete observation points discussed in Section~\ref{sec:discrete_points}, an Integer Linear Programming (ILP) approach is proposed. The approach is based on the concept of dividing the map into independent zones, each comprising an object's effective observing area and observation points capable of covering it. The starting point is considered an independent zone that contains only a single point. The subsequent step involves selecting a single point from each zone to form a route that passes through all observation points and the starting point. The formulated ILP approach is based on the Traveling Salesman Problem (TSP) formulation and is illustrated in Equation~(\ref{eq:0}) to Equation~(\ref{eq:6}).

\begin{align}
    &\min \sum d_{i, j, p_1, p_2}\cdot X_{i, j, p_1, p_2}  \label{eq:0} \\ 
    &s.t.\sum_{j}\sum_{p_1}\sum_{p_2} X_{i, j, p_1, p_2} = 1, &\forall i \in [1, n+1] \label{eq:1}\\
    &\sum_{i}\sum_{p_1}\sum_{p_2} X_{i, j, p_1, p_2} = 1, &\forall j \in [1, n+1] \label{eq:2}\\
    &\sum_{j}\sum_{p_2} X_{i, j, p_1, p_2} = \sum_{j}\sum_{p_2} X_{j, i, p_2, p_1} &\forall i \in [1, n+1], p_1\in zone 1 \label{eq:3}\\
    &\sum_{j \notin \{1, N+1\}}\sum_{p_2} (X_{1, j, p_1, p_2}+X_{N+1, j, p_1, p_2}) \notag\\
    &\quad= \sum_{j \notin \{1, N+1\}}\sum_{p_2} (X_{j, 1, p_2, p_1}+X_{j, N+1, p_2, p_1} )  &\forall p_1\in zone 1  \label{eq:4}\\
    &u_{i}-u_{j} + NX_{ijp_1p_2} \leq N-1,  &\forall i, j \in V, i\neq j \neq 0 \label{eq:5}\\
    &\sum_{i}\sum_{p_1}q_{i, p_1}\sum_{j}\sum_{p_2} X_{i, j, p_1, p_2} \geq q* \label{eq:q_constrain}\\
    &X_{i, j, p_1, p_2} \in 	\left\{ 0, 1 \right\}, u_{i} \geq 0, u_{i} \in R \label{eq:6}
\end{align}
\label{ILP_formula}

Let $X_{i, j, p_1, p_2}$ be defined as 1 if an edge exists between $p_1$ in zone $i$ and $p_2$ in zone $j$. The TSP route is then constructed by selecting all edges for which $X_{i, j, p_1, p_2} = 1$. $d_{i, j, p_1, p_2}$ represents the distance between point $p_1$ in zone $i$ and $p_2$ in zone $j$. The objective function~(\ref{eq:0}) aims to minimize the total cost of selected edges, subject to the constraints presented as follows. Constraint~(\ref{eq:1}) ensures that there is only one incoming edge coming from other zones connecting to only one point in zone $i$, while constraint~(\ref{eq:2}) guarantees that there is one outgoing edge leaving zone $i$ and heading to some other zone. To ensure that only one point is selected in each zone, the point connected with the incoming edge must be identical to the one connected with the outgoing edge. This requirement is formulated in constraint~(\ref{eq:3}), where the in-degree is always equal to the out-degree. Thus, a point is either not selected or has both an incoming edge and an outgoing edge. Furthermore, constraint~(\ref{eq:4}) ensures the correctness of the degree of the first and last point in the path. In addition, an effective TSP path requires the absence of subloops, which is achieved by modifying the Miller-Tucker-Zemlin (MTZ) constraint~\cite{MTZ-tsp}, as presented in constraint~(\ref{eq:5}). Finally, Equation~(\ref{eq:q_constrain}) is used to ensure the observation quality constraint, where $q_{i, p_1}$ denotes the quality obtained from point $p_1$ in the $i^{th}$ zone.

\section{Methodology}
\label{sec: methodology}
This section includes the lower bound and complete methods for the problem. The general idea to solve the problem is using dynamic programming to adjust an obtained path, which is able to observe all the objects before returning to the starting point but may not be subject to the quality constraint. We first explain the design of dynamic programming as an adjustment that can give a $(1+\epsilon)$-approximation solution of any given visiting order. The methods to search for the visiting orders are then shown before presenting the lower bound of the problem.

\subsection{Dynamic programming adjustment}\label{sec: DP}

 A dynamic programming algorithm, the core part of our work, is presented at length in this section. More specifically, it processes from the first object to the last one in order by continuously updating the shortest distance from the starting point to the current point, as the relaxation does in Dijkstra's algorithm~\cite{Dijkstra}. The quality constraint necessitates recording the observation quality at each point along the path, which leads to the design of a multidimensional table. Furthermore, the shortest path to $o_i$ can be exactly a straight line from that of $o_{j}$ if there is an overlap of the observation areas of objects $o_{j+1}$ to $o_i$ so that one observation point may be sufficient for the UAV to cover all of them (Figure~\ref{img:overall-img}).

\begin{figure}[htbp]
\centerline{\includegraphics[width=0.7\textwidth]{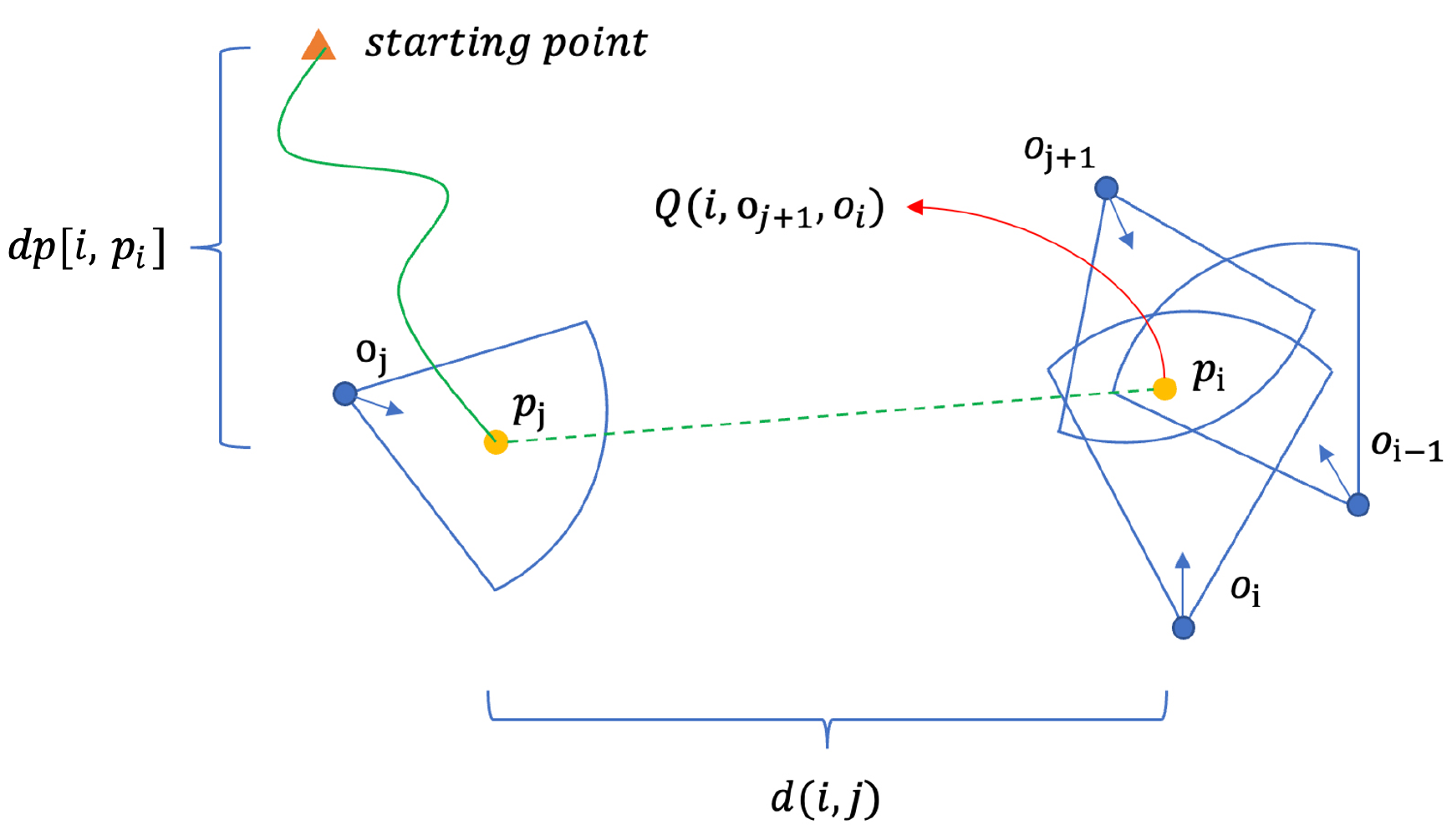}}
\caption{Construction of the path from $o_j$ to $o_i$ with the quality $q$. Since the UAV can observe $o_{j+1}$ to $o_{i}$ at $p_i$, it can directly go from $p_j$ to $p_i$, as the green dashed line shows.}
\label{img:overall-img}
\end{figure}

\begin{definition}
    Let $dp[i, p_i]$ be the set containing pairs of the path length and observation quality $(d, q)$, which implies one path to observation points $p_i$ of the $i^{th}$ objects. Formally, $dp[i, p_i] = \{(d_1, q_1), (d_2, q_2), ..., (d_z, q_z)\}$.
\end{definition}

Note that $i$ here is the index of the object in the visiting order. If the order is $[0, 1, 3, 4, 2, 0]$ in which $0$ refers to the starting point, the second object to observe is $o_3$. 

\begin{definition}
    Let $Q(p_i, o_{j+1}, o_i)$ be the gross observation quality of $p_i$ that observes the object $\{o_{j+1}, ..., o_i\}$ with respect to the observation order. If there is any object not observed by $p_i$, $Q(p_i, o_{j+1}, o_i)$ is set to negative infinity.
\end{definition}

According to the above definitions, the dynamic programming recursion is designed as follows.

\begin{align} \label{formula:DP}
    dp[i, p_i] = \mathop{\min}_{\substack{{j<i} \\ p_j \in \mathcal{P}}} { dp[j, p_j]+ (d(p_j, p_i), Q(p_i, o_{j+1}, o_i))} 
\end{align}

In Equation~(\ref{formula:DP}), $d(p_j, p_i)$ is the distance from $p_j$ (belonging to object $o_j$) to $p_i$ (belonging to object $o_i$). From an overall perspective, the form of Equation~(\ref{formula:DP}) is similar to that of the shortest path problem, yet the most distinctive feature is that a pair $(d_z, q_z)$ is used to record the path length $d_z$ from $P_0$ to the current object $i$ when the UAV acquires quality $q_z$ on this segment of the path. 

Since $dp[i, p_i]$ is the set containing all the paths feasible to point $p_i$ of object $o_i$, there exist many paths that are dominated. Therefore, it is necessary to eliminate all the dominated pairs in $dp[i, p_i]$ after each updating via Equation~(\ref{formula:DP}) based on Definition~\ref{def:domiante}, denoted as $\min$.

\begin{definition}\label{def:domiante}
    Suppose there are two pairs $a = (d_1, q_1)$ and $b = (d_2, q_2)$. Define $a$ dominates $b$ if it satisfies any of the following conditions:
    $(1):d_1=d_2$ and $q_1>q_2$, $(2):d_1<d_2$ and $q_1\geq q_2$,
    $(3):d_1=d_2$ and $q_1= q_2$.
\end{definition}

With the transition equation, the objective function is shown in Equation~(\ref{formula:objective of DP}) and the initial condition in Equation~(\ref{formula:initial condition of DP}).

\begin{equation} \label{formula:objective of DP}
    \mathop{\min}_{\substack{p_n \in \mathcal{P}}} { dp[n, p_n] + (d(p_n, P_0), 0)} \\
\end{equation}

\begin{equation}\label{formula:initial condition of DP}
dp[0, p_0] = (0, 0)
\end{equation}

\begin{lemma}
    The number of observation points for $n$ objects is $O\left(\frac{n^3{\epsilon}+n^2}{\epsilon^3}\right)$.
\end{lemma}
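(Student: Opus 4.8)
The plan is to count the total number of discrete observation points produced by Algorithm~\ref{algo:generate_observation_points}, which generates $P_*$ for a single canonical object and then replicates it (via rotation and translation) for each of the $n$ objects. Hence $|\mathcal{P}| = n \cdot |P_*|$, and it suffices to bound $|P_*|$, the number of mesh cells covering one object's effective observation region. That region is an annular sector: radii range over $[d_{min}, d_{max}]$ and angles over $[-\theta, \theta]$. The number of points is the number of radial subdivisions times the number of angular subdivisions of this sector.

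First I would bound the number of radial rings. The radial extent is $d_{max} - d_{min}$, and each ring has width $A = \min\{A_1, A_2\}$ with $A_2 = \delta = \frac{\epsilon D}{n}$; in the worst case the spacing is the finer value $\delta$, so the number of rings is $O\!\left(\frac{d_{max}-d_{min}}{\delta}\right) = O\!\left(\frac{n(d_{max}-d_{min})}{\epsilon D}\right)$. Next I would bound the number of angular segments per ring. Here the subtlety is that the angular \emph{width} $B = \min\{B_1, B_2\}$ is measured in radians, while the discretization cell must have arc length comparable to $\delta$ to control the geometric (path-length) error; at radius $r \le d_{max}$ an angular step of $B_2 = \delta$ radians gives arc length $r\delta \le d_{max}\delta$, but to guarantee the cell's physical size is at most $\delta$ one effectively needs about $d_{max}/\delta$ angular pieces across the full sector. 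So the number of angular segments is $O\!\left(\frac{d_{max}\,\theta}{\delta}\right)$ or, absorbing $\theta$ and $d_{max}$ into constants tied to the fixed sensing geometry, $O\!\left(\frac{d_{max}}{\delta}\right) = O\!\left(\frac{n\,d_{max}}{\epsilon D}\right)$. Multiplying, $|P_*| = O\!\left(\frac{n^2 d_{max}(d_{max}-d_{min})}{\epsilon^2 D^2}\right)$, and then $|\mathcal{P}| = n|P_*| = O\!\left(\frac{n^3}{\epsilon^2}\right)$ after treating $d_{max}$, $d_{min}$, $D$ as constants of the instance.

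To land on the claimed form $O\!\left(\frac{n^3\epsilon + n^2}{\epsilon^3}\right)$, I would not collapse all distance parameters to constants but instead keep one more term from the two-sided $\min$ in Equations~(2)--(3): when $A_1$ (the original ring width from the $K_1$-ring decomposition, which is tied to the quality-approximation and scales like a constant independent of $\epsilon$) is the binding value rather than $\delta$, the ring count contributes an $O(1/\epsilon)$-free term, whereas when $\delta$ binds it contributes the $1/\epsilon^2$ term; combining the two regimes over $n$ objects yields a numerator of the form $n^3\epsilon + n^2$ over $\epsilon^3$. Concretely, I would write the per-object point count as $\big(c_1 + \tfrac{c_2 n}{\epsilon}\big)\cdot\big(c_3 + \tfrac{c_4 n}{\epsilon}\big) = O\!\left(1 + \tfrac{n}{\epsilon} + \tfrac{n^2}{\epsilon^2}\right)$, multiply by $n$, and then clear a common denominator of $\epsilon^3$, noting $\tfrac{n}{\epsilon} = \tfrac{n\epsilon^2}{\epsilon^3} \le \tfrac{n}{\epsilon^3}$ and $\tfrac{n^2}{\epsilon^2} = \tfrac{n^2\epsilon}{\epsilon^3}$, so that $n\cdot O\!\left(1+\tfrac{n}{\epsilon}+\tfrac{n^2}{\epsilon^2}\right) = O\!\left(\tfrac{n^3\epsilon + n^2}{\epsilon^3}\right)$ once the lower-order $n/\epsilon^3$-type slack is absorbed.

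The main obstacle I anticipate is pinning down exactly which quantity governs the angular subdivision count: because the mesh in Equation~(\ref{formula:mesh}) is specified by an angular width $\delta$ in the angle domain rather than by arc length, one must argue carefully that at most $O(d_{max}/\delta)$ angular cells suffice (or are used) to keep each cell's Euclidean diameter at most $\delta$, which is what Lemma~\ref{lemma:mesh}'s per-point error bound of $\delta$ silently relies on. A secondary bookkeeping hazard is the interplay of the two $\min$s defining $A$ and $B$: I must handle both the ``$\delta$ is finer'' and ``the original $K_1,K_2$ decomposition is finer'' cases and take the worst case consistently, since the stated bound's additive $n^2$ term comes precisely from the $\epsilon$-independent branch while the $n^3\epsilon$ (i.e.\ $n^3/\epsilon^2$ after division) term comes from the $\delta$-dominated branch. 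Once those two points are settled the rest is the routine area/cell-size division and multiplication by $n$ sketched above.
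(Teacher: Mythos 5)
Your overall strategy --- count radial times angular cells per object and multiply by $n$ --- is the same in spirit as the paper's, and your treatment of the $\delta$-mesh branch ($A_2=B_2=\delta=\tfrac{\epsilon D}{n}$) correctly yields $O\bigl(n\cdot\tfrac{nd_{max}}{\epsilon D}\cdot\tfrac{nd_{max}\theta}{\epsilon D}\bigr)=O\bigl(\tfrac{n^3}{\epsilon^2}\bigr)$, which is the $\tfrac{n^3\epsilon}{\epsilon^3}$ term; your worry about angle versus arc length is also resolved exactly as the paper resolves it (the angular count per object is $\tfrac{d_{max}\theta}{\delta}$, i.e.\ arc length at the outer radius over $\delta$). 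The genuine gap is in your account of the \emph{other} branch of the two $\min$'s. You assert that $A_1$ and $B_1$ --- the ring width and angular width of the original $K_1\times K_2$ decomposition --- ``scale like a constant independent of $\epsilon$,'' so that this branch contributes only $O(1)$ cells per object. That premise is false: that decomposition is constructed precisely so that the observation quality is approximated to within an error tied to $\epsilon$, and Theorem 4.4 of \cite{deploy} --- which the paper's proof simply cites --- shows it already produces $O\bigl(\tfrac{n^2}{\epsilon^3}\bigr)$ observation points over the $n$ objects. That is the actual source of the additive $n^2$ (i.e.\ $\tfrac{n^2}{\epsilon^3}$) term in the lemma. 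Your derivation instead lands on $O\bigl(\tfrac{n^3}{\epsilon^2}+\tfrac{n^2}{\epsilon}+n\bigr)$ and pads it up to the stated form; since $\tfrac{n^2}{\epsilon^3}$ exceeds $\tfrac{n^3}{\epsilon^2}$ whenever $\epsilon<\tfrac{1}{n}$, your count genuinely underestimates $|\mathcal{P}|$ in that regime, so the intermediate claim is wrong even though the final $O(\cdot)$ happens to be generous enough to contain your (too small) answer. This matters downstream, because Theorem~\ref{theorem}'s running time squares $|\mathcal{P}|$.

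The repair is the paper's route: the generated point set is the refinement of two meshes, so bound its size by the sum of the two contributions --- $O\bigl(\tfrac{n^2}{\epsilon^3}\bigr)$ from the quality-driven decomposition of \cite{deploy}, plus $O\bigl(\tfrac{n^3}{\epsilon^2}\bigr)$ from the $\delta$-mesh computed as you did --- giving $O\bigl(\tfrac{n^2+n^3\epsilon}{\epsilon^3}\bigr)$. (If you want to avoid citing \cite{deploy} as a black box, you would need to re-derive how $K_1$ and $K_2$ depend on $\epsilon$ and $n$ from the quality-error requirement, which is precisely the step your sketch replaces with an unjustified constant.)
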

\begin{proof}
    Provided by Theorem 4.4 in~\cite{deploy}, the number of observation points for $n$ objects is $O\left(\frac{n^2}{\epsilon^3}\right)$. By using the discretization method of $\delta$, the discretization interval is $\frac{{\epsilon}D}{n}$. In this case, there are at most $\frac{d_{max}n}{{\epsilon}D}$ points in the distance domain and at most $\frac{d_{max} {\theta}  n}{{\epsilon}D}$ points in the angle domain, the multiple of the which is the maximum number of observation points for one object. Therefore, the number of all the observation points $|\mathcal{P}|$ is:
    \begin{align}
        &O\left(\frac{n^2}{\epsilon^3}\right)+O\left(\frac{nd_{max}}{\epsilon{D}} \cdot \frac{nd_{max}{\theta}}{\epsilon{D}} \cdot n\right)
        =O\left(\frac{n^2}{\epsilon^3}+\frac{n^3}{\epsilon^2}\right)
    \end{align}
\end{proof}

\begin{theorem} \label{theorem}
    The dynamic programming algorithm runs in $O\left((\frac{n^2+n^3\epsilon}{\epsilon^3})^2 \cdot \log(\frac{n^2}{\epsilon})\right)$ time and compute a $(1+\epsilon)$-approximation solution on both path length and observation quality.
\end{theorem}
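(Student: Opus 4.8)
The plan is to prove the theorem in two independent parts: first establish the running time, then establish the $(1+\epsilon)$-approximation guarantee, treating path length and observation quality separately.

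\medskip

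\textbf{Running time.} First I would bound the number of states. By the preceding lemma, $|\mathcal{P}| = O\!\left(\frac{n^2+n^3\epsilon}{\epsilon^3}\right)$, and since the table is indexed by a point $p_i$ and its object's position $i$ in the visiting order, there are at most $O(n\cdot|\mathcal{P}|)$ table cells; the factor $n$ is absorbed into the polynomial bound. The key subtlety is that each cell $dp[i,p_i]$ holds a \emph{set} of nondominated $(d,q)$ pairs rather than a single value, so I must bound $z$, the size of that Pareto frontier. Here I would argue that because all path lengths arising in the recursion are sums of at most $n$ inter-point distances on a discretized grid, the distinct achievable $d$-values (equivalently $q$-values, by domination condition~(1)) number at most $O\!\left(\frac{n^2}{\epsilon}\right)$ — this is where the $\log(\frac{n^2}{\epsilon})$ in the statement comes from, namely the cost of maintaining each frontier sorted / doing binary search during merges. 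Then, evaluating the recursion~(\ref{formula:DP}) for a fixed target $p_i$ ranges over all $p_j \in \mathcal{P}$ and all pairs in $dp[j,p_j]$; the dominating term in the product is $|\mathcal{P}|^2$, and the frontier pruning $\min$ after each update contributes the logarithmic factor. Collecting these gives the stated $O\!\left(\left(\frac{n^2+n^3\epsilon}{\epsilon^3}\right)^2\log\!\left(\frac{n^2}{\epsilon}\right)\right)$ bound, and I would note that the per-step work of computing $Q(p_i,o_{j+1},o_i)$ (checking efficient-observation for a contiguous block of objects) is polynomial and folded into the same asymptotics.

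\medskip

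\textbf{Approximation on path length.} The dynamic program is exact \emph{on the discretized instance}: a straightforward induction on the position $i$ in the visiting order shows that $dp[i,p_i]$ contains, for every achievable quality level, the true shortest-path length from $P_0$ through the first $i$ objects ending at $p_i$, because the recursion considers every predecessor object $o_j$ and every predecessor point $p_j$, including the "straight-line shortcut" case where a single point $p_i$ covers the block $o_{j+1},\dots,o_i$ (Figure~\ref{img:overall-img}). So the only error is the discretization error, and here I would invoke Lemma~\ref{lemma:mesh}: replacing each of the (at most $n$) chosen observation points of an optimal continuous solution by its nearest grid point perturbs total length by at most $n\delta = \epsilon D$, and since $D$ is a lower bound on the optimal length (the UAV must travel at least $D$ to observe the two farthest objects), the relative error is at most $\epsilon$, giving $(1+\epsilon)$.

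\medskip

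\textbf{Approximation on observation quality.} The same rounding argument applies to quality: moving from a continuous observation point to the upper-left corner of its segment changes $q(o_i,p_j)$ by a bounded amount per object, and by construction of the mesh parameters $A_1,B_1$ (the distance- and angle-granularity chosen precisely so the per-segment quality variation is controlled by $\epsilon$), the total quality deviation over $n$ objects is within a $(1+\epsilon)$ factor of the continuous optimum; hence any continuous solution meeting $q^*$ has a discretized counterpart meeting $q^*$ up to the same factor, and conversely the quality recorded by the DP is accurate to within $(1+\epsilon)$. Combining the two rounding bounds with the exactness of the DP on the grid yields the theorem.

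\medskip

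\textbf{Main obstacle.} I expect the hard part to be the bound on the Pareto-frontier size $z$ — i.e.\ rigorously showing that only polynomially many distinct $(d,q)$ pairs survive domination at each cell — since naively the number of quality sums could be exponential; the argument must lean on the fact that both coordinates live on a grid of polynomial resolution induced by $\delta$, and that domination condition~(2) collapses pairs aggressively. A secondary delicate point is making the "two rounding errors compose into a single $(1+\epsilon)$" claim precise, since in principle one might need $(1+\epsilon/2)$ on each to conclude $(1+\epsilon)$ overall, or a union-type argument across the length and quality axes.
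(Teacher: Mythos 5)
Your proposal is correct and follows essentially the same route as the paper: the running-time analysis bounds $|\mathcal{P}|$ via the preceding lemma, bounds the Pareto-frontier size by $O(n^2/\epsilon)$ by observing that all path lengths are rounded to integer multiples of $\delta$ so that $|L_s|\leq n(D+2d_{max})/\delta$, and multiplies the $|\mathcal{P}|^2$ pairs of states by the $O(\log(n^2/\epsilon))$ cost of maintaining each ordered frontier. The only substantive difference is that you also spell out the $(1+\epsilon)$ guarantee for both length and quality (exactness of the DP on the grid plus the rounding bound), whereas the paper's proof of the theorem addresses only the running time and defers the approximation claim entirely to Lemma~\ref{lemma:mesh} and the discretization construction.
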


\begin{proof}[Proof of Theorem 1]
    The size of the dynamic programming table is $O\left(n \cdot |\mathcal{P}|\right)$. However, only the observation points of the current object are considered when filling out the table\footnote{Without loss of generality, we assume that all objects have the same number of observation points.}, hence the outer loop requires time of $O\left(n \cdot \frac{|\mathcal{P}|}{n}\right)$. The inner loop is to traverse over the previous part of the table with the same complexity of $O\left(n \cdot \frac{|\mathcal{P}|}{n}\right)$, and then to update the set of length and observation quality. A new pair is added to the set at each update, which can be maintained as ordered in $O\left(\log |L_s|\right)$ time. Now we focus on calculating $|L_s|$, which is the maximum number of pairs in the set. 

    The maximum path length in this problem can be bounded by $O\left(n(D+2d_{max})\right)$ and using interval $\delta$ ensures that the previous discretization method is PTAS. In this case, all the possible distance value in the set is rounded to an integer multiple of $\delta$, thus $|L_s| \leq \frac{n(D+2d_{max})}{\delta}$. Consequently, the time complexity of the domination operation on the set is: 
    \vspace{1em}
    \begin{align}
        O\left(\log(\frac{n(D+2d_{max})}{\delta})\right)
        =&O\left(\log(\frac{n^2(D+2d_{max})}{{\epsilon}D})\right)
        =&O\left(\log(\frac{n^2}{\epsilon})\right)
    \end{align}
    
    Integrating the previous outcomes, we get the overall time complexity as:

    \begin{align}
        O\left(n \cdot\frac{|\mathcal{P}|}{n}\cdot n \cdot \frac{|\mathcal{P}|}{n} \cdot \log(\frac{n^2}{\epsilon})\right)
        =O\left((\frac{n^2+n^3\epsilon}{\epsilon^3})^2 \cdot \log(\frac{n^2}{\epsilon})\right)
    \end{align}
\end{proof}

\subsection{Determination of observing order}

Any feasible route contains a certain observing order of the objects. Even though the dynamic programming algorithm above accepts arbitrary observing orders, it is still demanding in searching for better orders that lead to lower path costs. Since it is impractical to enumerate all the $\frac{A^n_n}{2}$ possible orders, in this work, several heuristic algorithms are proposed to search the orders, which are compared later in the experiments to evaluate the performance.

\subsubsection{Random Selection (RS)} \label{sec:random select}
The first method is randomly selecting one observation point for each object. After obtaining all the points to visit, a classic TSP~\cite{1.5TSP} algorithm is used to solve the path on the starting point and the selected observation points.

\subsubsection{Nearest Point First (NPF)} \label{sec:nearest}
The second method (shown in Algorithm~\ref{algo:Nearest point first}) is setting off from the starting point and always choosing the nearest observation point that belongs to an object not visited yet. The process is continuously repeated until all objects are observed. The UAV then directly returns to the starting point.

\begin{algorithm}[ht]
  \caption{Nearest Point First} 
    \label{algo:Nearest point first}
    \begin{algorithmic}[1]
      \Require
        Object set $\mathcal{O}$, Observation point set $\mathcal{P}$, starting point $P_0$
      \Ensure
        $path$: The visiting order of the route
        \State $path \gets {P_0} $
        \While{$\mathcal{O}$ not empty}
            \State choose the nearest point $p_{next}$ from $\mathcal{P}$
            \State current location $\gets p_{next}$
            \State $ path.append(p_{next})$
            \State delete objects O' from $\mathcal{O}$ that can be observed at $p_{next}$
            \State delete observation points of O' from $\mathcal{P}$
        \EndWhile
    \end{algorithmic}
\end{algorithm}

\subsubsection{Generalized TSP (GTSP)} \label{sec:GTSP}
Generalized TSP is a variant of TSP that chooses at least one node from each zone and solves a path on these nodes. This part uses the same 
clustering method as ILP. After that, a Generalized TSP solver~\cite{GTSP} is carried out to solve a path (order) on these clusters.

\subsubsection{TSP over Objects (TSPO)} \label{sec:TSP on object}
This method is relatively intuitive, which directly views all the objects and $P_0$ as vertices and then uses the algorithm from~\cite{1.5TSP} to get the vising order. 

\subsubsection{TSP on Lower Bound (LBTSP)} \label{sec:TSP on LB}
Given the proposed lower bound in Section~\ref{sec: lower bound}, there is a weighted graph essentially including all the objects and the starting point $P_0$. Even though the graph may not be subject to the Euclidean metrics, we can still obtain a visiting order by conducting the algorithm from~\cite{1.5TSP} on it. 

\subsection{Lower bound} \label{sec: lower bound}
To calculate the approximation ratio of the algorithms in experiments, a lower bound of the problem is designed and is described in this part.(shown in Algorithm~\ref{algo:lower bound})

We consider the observation points of each object to be one cluster and $P_0$ as an independent cluster, similar to the design of ILP in Section~\ref{sec:ILP}. Each cluster is then viewed as a single vertex after solving the minimum distance between each of them. If there is an overlapping area between two clusters, the minimum distance between them is 0. This distance is used as the weight of the edges among vertices, hence a graph is formed. The lower bound is exactly the cost of the minimum spanning tree of the graph. 

\begin{algorithm}[ht]
  \caption{Compute a lower bound} 
    \label{algo:lower bound}
    \begin{algorithmic}[1]
      \Require
        Object set $\mathcal{O}$, Observation point set $\mathcal{P}$, and starting point $P_0$
      \Ensure
        $LB$: cost of the lower bound
        \State consider $P_0$ as $cluster_0$
        \For{$o_i \in O$}
            \State consider all the observation points that can observe $o_i$ as $clutser_i$
        \EndFor
        \State create a graph $G$ that treat each $cluster_i$ as a vertex
        \State compute the minimum distance between every two clusters and set it as the weighted edge
        \State $LB \gets$ cost of minimum spanning tree of $G$
    \end{algorithmic}
\end{algorithm}

\section{Experiments}
\label{sec: experiments}
In this section, we evaluate the performance of the proposed algorithms through a series of simulated experiments. The experiments contain two parts, the first of which focuses on the numerical results of synthetic cases while the second is conducted in the Airsim simulator~\cite{Airsim} to test the performance in the realistic virtual environment.

\subsection{Experiments on synthetic cases}
A number of objects are placed randomly on a map with positive coordinates and orientations of 0 to 360 degrees. The map size is set to 200 meters. To ensure both the quality and safety of the observation, we set the minimum observing distance to 2 m. The maximum observing angle of an object, as defined $\theta$, is set to 30 degrees~\cite{deploy} and $\epsilon$ is 0.5.

\subsubsection{Brute force experiments} \label{sec:brute_force_experi}
Brute force is to find the optimal observing order, based on which the dynamic programming yields the lowest path length for a fixed quality requirement. The number of objects $n$ is set to be $\left\{3, 4, 5, 6, 7, 8\right\}$ and observing distance 10 m with 250 random cases for each. The quality requirement is calculated by multiplying the highest quality $n*q_{max}$ with $q^*$ where $q^* \in \left\{0.3, 0.4, 0.5, 0.6, 0.7, 0.8, 0.9\right\}$. After obtaining the optimal path, we further run the five order determination algorithms before dynamic programming, the output of which are compared with the optimal results from the brute force. 

\subsubsection{Different range and angle experiments} \label{sec:different_range_angle_experi}
In the second part, $n$ is set to $\left\{5, 10, 15, 20, 25, 30\right\}$ and for each $n$ we further set the maximum observation distance $d_{max}$ to $\left\{4, 6, 8, 10, 12\right\}$ to analyze the difference in the approximation ratio. For each $n$, as in the first part, we randomly generate 200 cases and change $d_{max}$ for each case. The order determination and dynamic programming algorithms are then applied on these cases and results with different $q^*$ are recorded. 

All the experiments are conducted on the Linux server, with Intel (R) Xeon (R) Gold 5215 CPU 2.50GHz and 188 GB RAM.

\subsection{Experiments in the simulator}
\label{sec: simluation}
In addition to the tests on synthetic data, we carried out our algorithms in the Airsim simulator for practical applications, which is an open-source simulator developed by Microsoft for training and testing autonomous vehicles and drones in realistic virtual environments. The simulation consists of two parts. Firstly, We create 50 cylinders as objects in the 3DsMax software with radius of 1 m and height of 2 m. Subsequently, we paste on the cylinders high definition English advertisement images downloaded from the Internet, which are then imported into the simulator (see Figure~\ref{fig:recognized_imgs}). The map size is set to 150 m and 30 cases are generated for each $n$ in $\left\{5, 10, 15, 20\right\}$. For each case, $n$ objects are randomly chosen from the 50 ones and $q^*$ is set to $\left\{0.3, 0.5, 0.7, 0.9\right\}$. The drone will fly along the paths produced by different order determination algorithms and dynamic programming algorithm. Additionally, the initial paths are included provided by RS, NPF, and GTSP without adjustment. We also add a path that yields the highest observation quality from all the objects. The path is solved by using the algorithm from~\cite{1.5TSP} on the observation points that produces maximum observation quality of each object. During the flight, the UAV takes photos of the objects which are sent to the CnOCR API in Python~\cite{cnocr}. The observation quality is reflected in the accuracy of the recognized words in the photos.

\begin{figure*}[ht]
\centering
\begin{minipage}[b]{1\linewidth}
    \begin{subfigure}{0.32\textwidth}
        \centering
        \includegraphics[width=\linewidth]{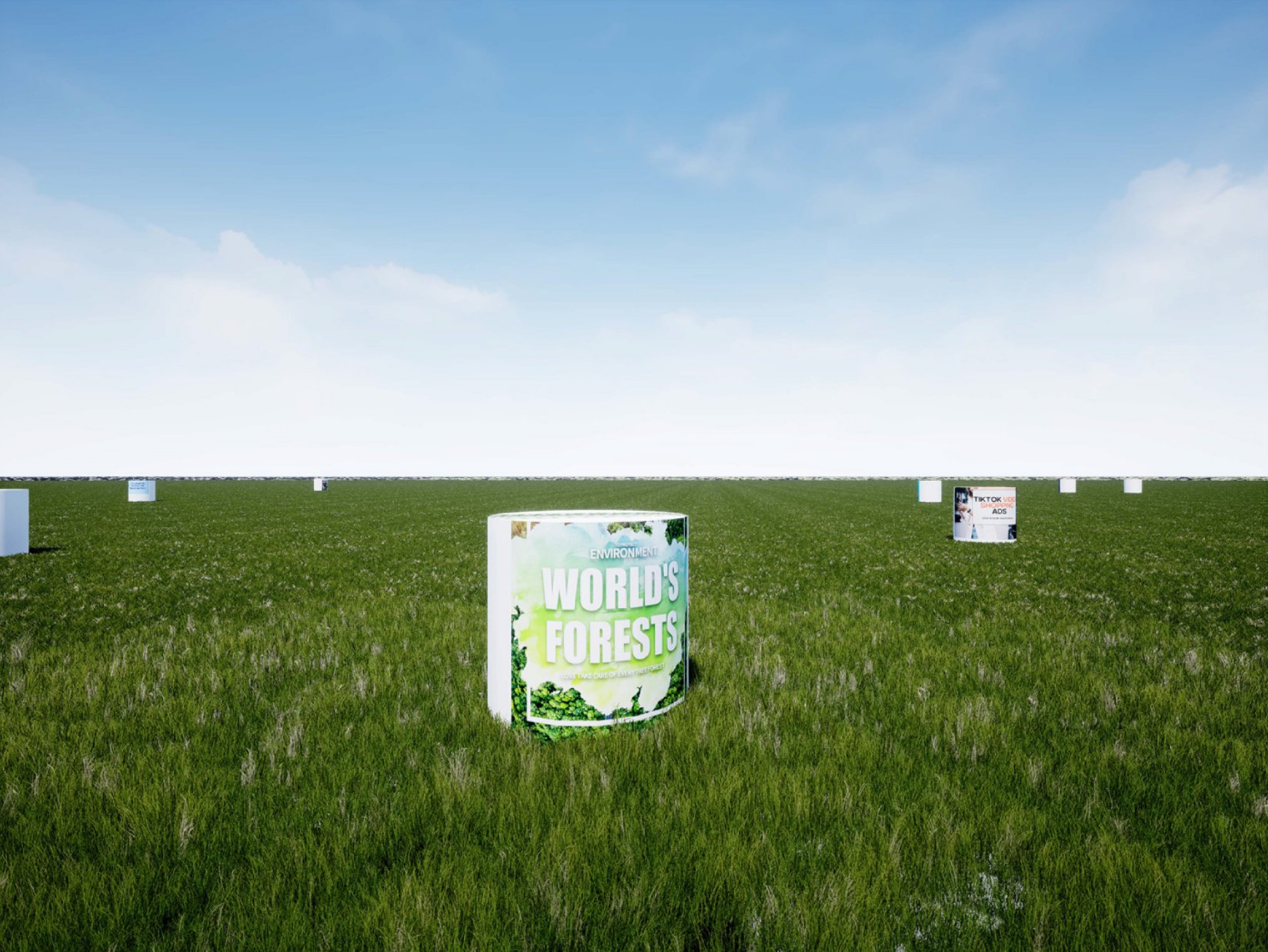}
        \captionsetup{justification=centering}
        \caption{'WORLDSFORESTS' \\ 13 / 13}
        \label{fig:recognized_img1}
    \end{subfigure}
    \begin{subfigure}{0.32\textwidth}
        \centering
        \includegraphics[width=\linewidth]{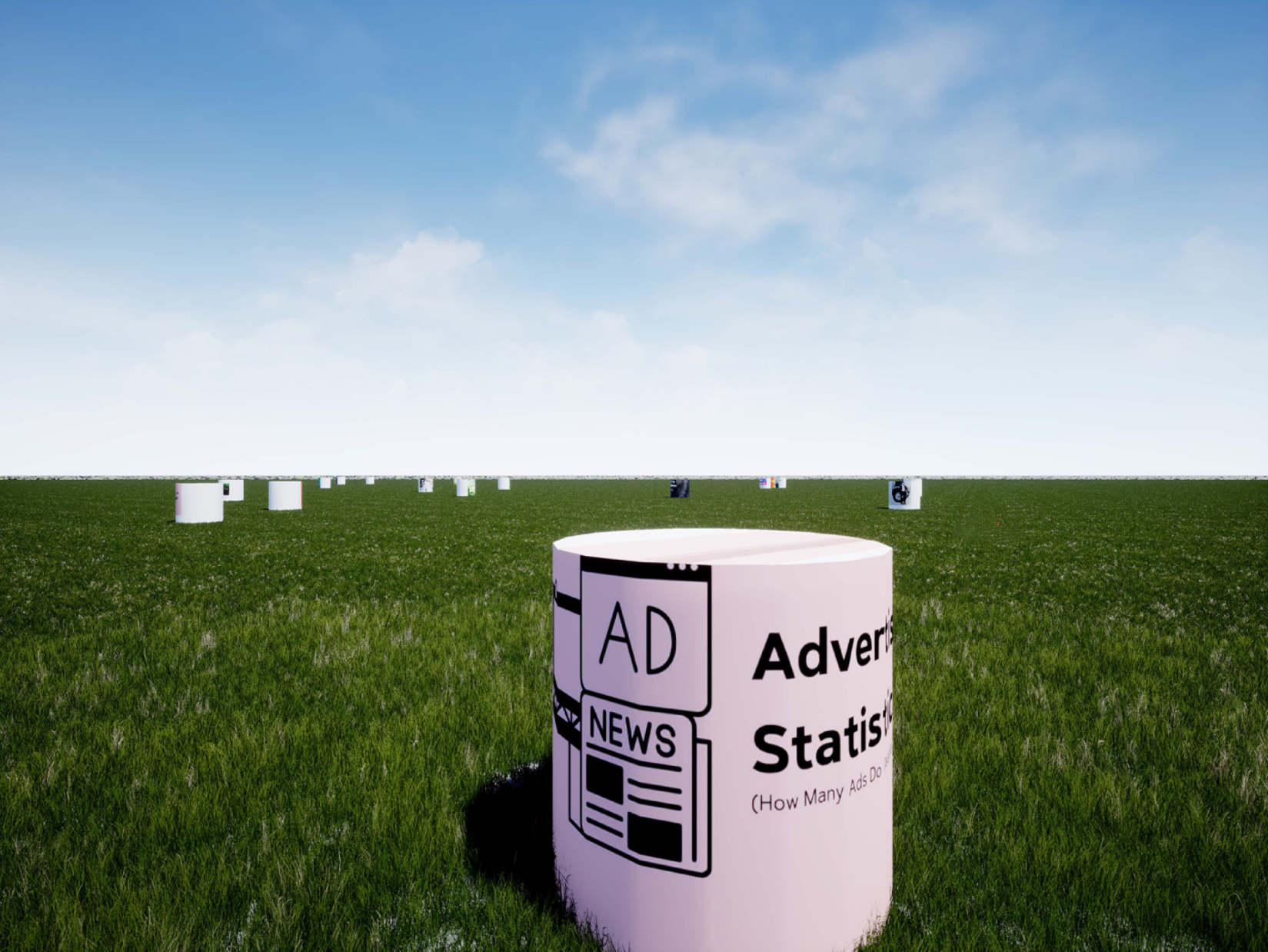}
        \captionsetup{justification=centering}
        \caption{'AdveStatis' \\ 10 / 21}
        \label{fig:recognized_img2}
    \end{subfigure}
    \begin{subfigure}{0.32\textwidth}
        \centering
        \includegraphics[width=\linewidth]{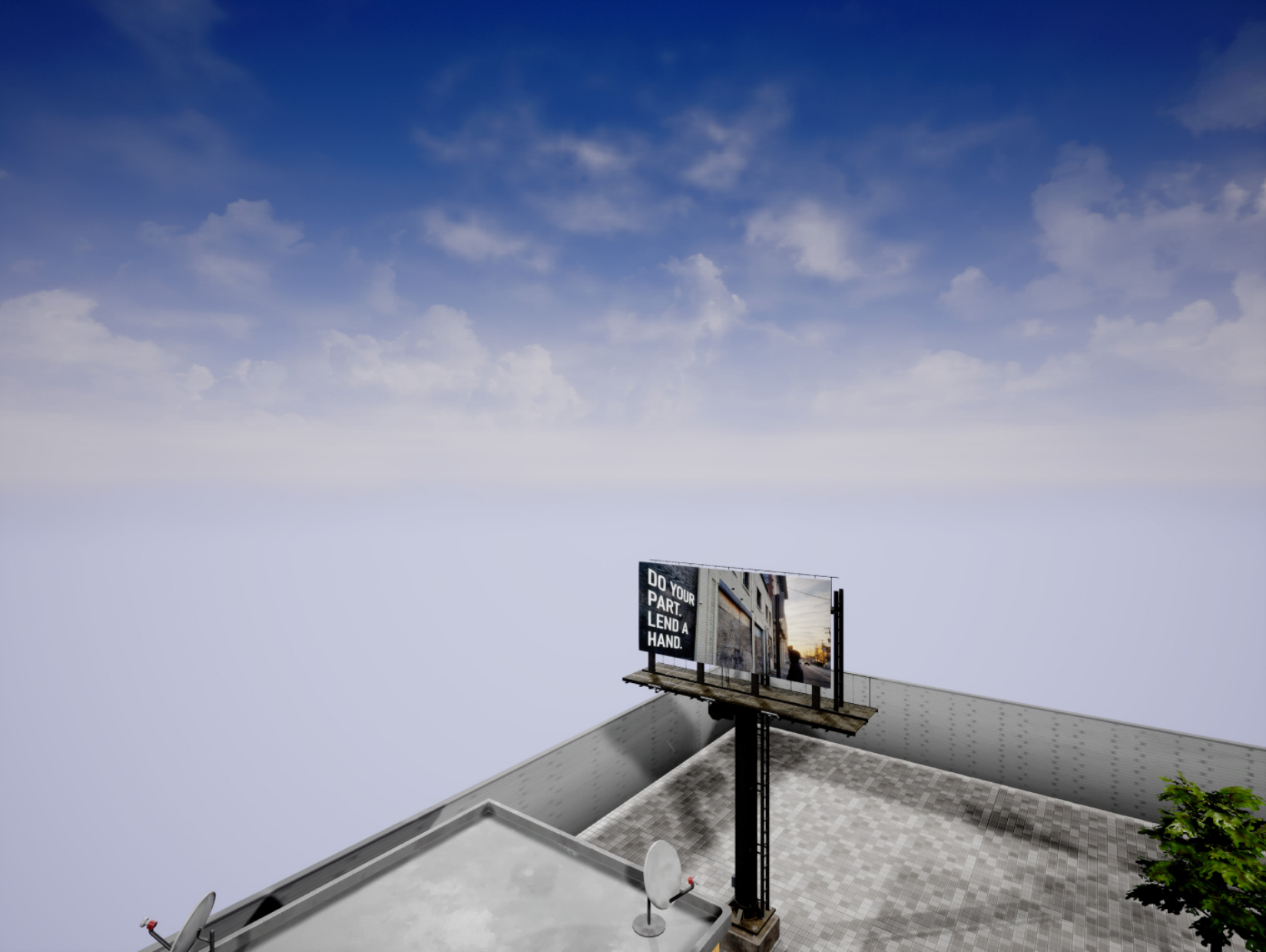}
        \captionsetup{justification=centering}
        \caption{'DLENDHAND' \\ 9 / 21}
        \label{fig:recognized_img3}
    \end{subfigure}
\end{minipage}
\caption{Photos taken by UAV of observed objects and the corresponding recognition results. For each subplot, the recognized string is presented before showing the number of correct letters and the label length.}
\label{fig:recognized_imgs}
\end{figure*}

Furthermore, a virtual city environment was constructed by using the 'Stylized Town Package' and 'Billboards VOL.1' packages from Unreal Engine Market~\cite{unreal-engine}. Dozens of different billboards and buildings with boards of names were randomly placed in the map to simulate a real and complex city (see Figure~\ref{fig:virtualcity}). The performance of the algorithms were then validated in this environment by setting $n$ to 15 and treating the boards of the buildings and billboards as objects. The flight height is slightly higher than the tallest building to avoid collision. We ran 15 cases in total, with the concrete processes and goals same as the first part. 

\begin{figure}[hbt]
\centering
\includegraphics[width=0.65\textwidth]{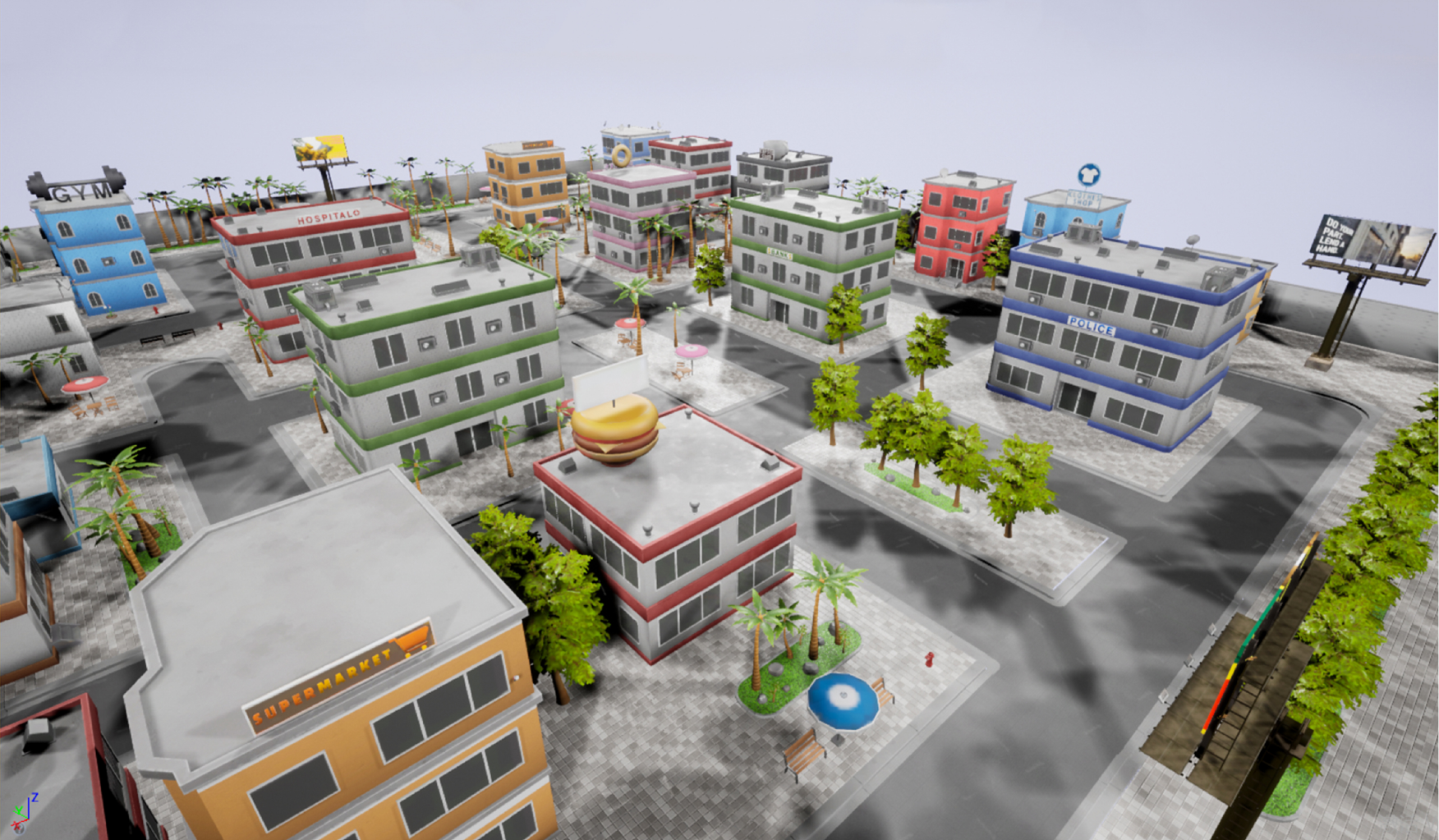}
\captionsetup{justification=centering}
\caption{Virtual city environment in the simulator}
\label{fig:virtualcity}
\end{figure}

To calculate the recognition accuracy in one case, we solve the longest common subsequence between the recognition result and the label of the object as the correctly recognized string. Suppose that the labels of the objects in one case are $\mathcal{L} = \left\{l_1, l_2, ..., l_m\right\}$, the correctly recognized strings $\mathcal{C} = \left\{c_1, c_2, ..., c_m\right\}$. We define the accuracy as the sum of all correctly recognized letters over all the letters in the labels, $\frac{\sum_{i} {|c_i|}}{\sum_{i} {|l_i|}}$. Figure~\ref{fig:recognized_imgs} shows the photos taken during the flight and the according OCR results.

Experiments on synthetic cases consider that the observation distance and angle are equal for all objects. In the simulation trails, however, owing to the different size of the images and words, the observing distance and observing angle are calculated separately for each object. We stipulate that the UAV should recognize at least $30\%$ correct letters for each object from some observation point. Based on this, $d_{max}$ and $d_{min}$ are searched via a binary search algorithm, after placing the UAV right on the front of the object. The observing angle $\theta$ is then obtained from a second binary search algorithm that guides the UAV to fly on the arc with the radius of $d_{max}$. Figure~\ref{fig:seefromtop} is an overview perspective showing the different observation range of objects and Table~\ref{table:objects_info} contains a proportion of the objects' information.

\begin{figure}[ht]
\centering
\includegraphics[width=0.65\textwidth]{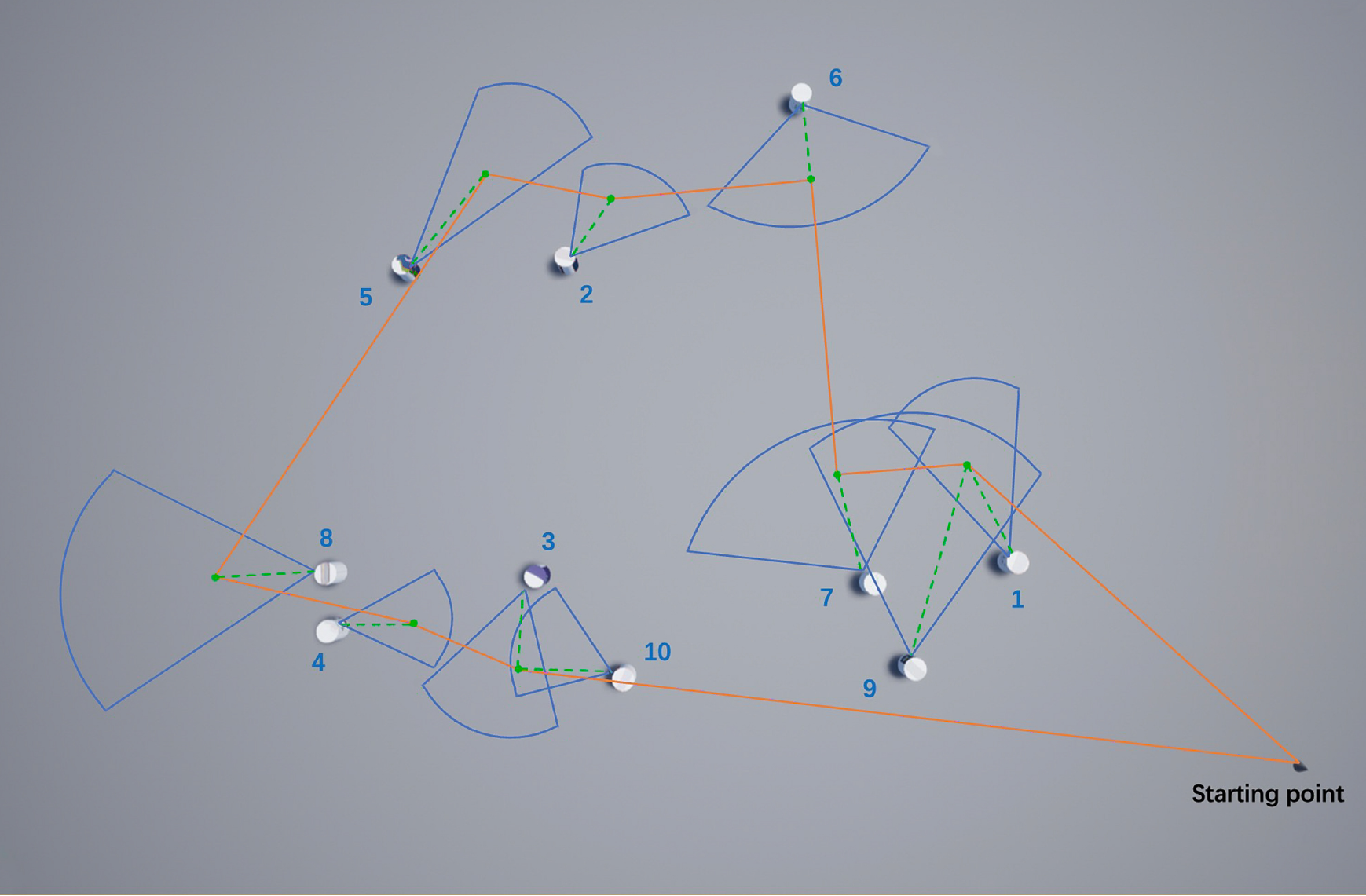}
\captionsetup{justification=centering}
\caption{An overview of the objects' observation range and a proposed path}\label{fig:seefromtop}
\end{figure}

\begin{table}[htbp]
\centering
\caption{Labels, maximum and minimum observation distance and maximum observation angles of some objects}\label{table:objects_info}
\small
\begin{tabular}{cccc}
\hline
object label & $d_{max}$  & $d_{min}$ & $\theta$  \\ \hline
Invincible & 23 & 5 & 26 \\ 
Pageantry & 29 & 4 & 33 \\ 
English Olympiad & 28 & 5 & 25 \\ 
Facebook Ads preparation & 21 & 5 & 23 \\ 
Account Based Marketing & 19 & 6 & 30 \\ 
PLEASE KEEP OFF THE GRASS & 16 & 5 & 33 \\ 
NOW WHICH SNACK PROTEIN PACK & 28 & 3 & 28 \\ 
TAKE CONTROL OF YOUR HEALTH & 26 & 4 & 14 \\ 
AXE STYLING EASY FOR EVERYDAY & 12 & 4 & 30 \\ 
How to Get Facebook Ads Working For You in 2023  & 13 & 4 & 28 \\ \hline
\end{tabular}
\end{table}

\subsection{Numerical results}
\label{sec:numerical_results}

In this section, the statistic results are presented in the following figures and tables. We first present the results of the brute force experiments, which are the comparison of our order determination algorithms with the optimal order. The results of different range and angle experiments are then demonstrated before presenting the running time. Finally, we show the simulation results.

\subsubsection{Comparison with brute force} \label{sec:brute_force_result}

Based the statistics, the approximation ratio remains less than 1.12 and is basically stable for every $n$ and $q^*$. There is a slight rise when $q^*$ is greater than 0.8. In every situation, GTSP algorithm always yields the best results and and LBTSP is the second only to it, with TSPO and RS showing fairly similar performance. NPF algorithm produces the worst results when integrated with the dynamic programming. On the whole, our algorithm yields approximation ratio very close to 1 and GTSP and LBTSP are the two best order searching methods.

\subsubsection{Different $n$ and range} \label{sec:different n and range}
We fix the observation range to 4, 8 and 12, and plot Figure~\ref{fig:effect_of_n}, in which the effect of $n$ is illustrated. After that, $n$ is fixed and the effect of observation range is shown in Figure~\ref{fig:effect_of_range}.

\begin{figure*}[ht]
\centering
\begin{minipage}[b]{1\linewidth}
    \begin{subfigure}{0.32\textwidth}
        \centering
        \includegraphics[width=\linewidth]{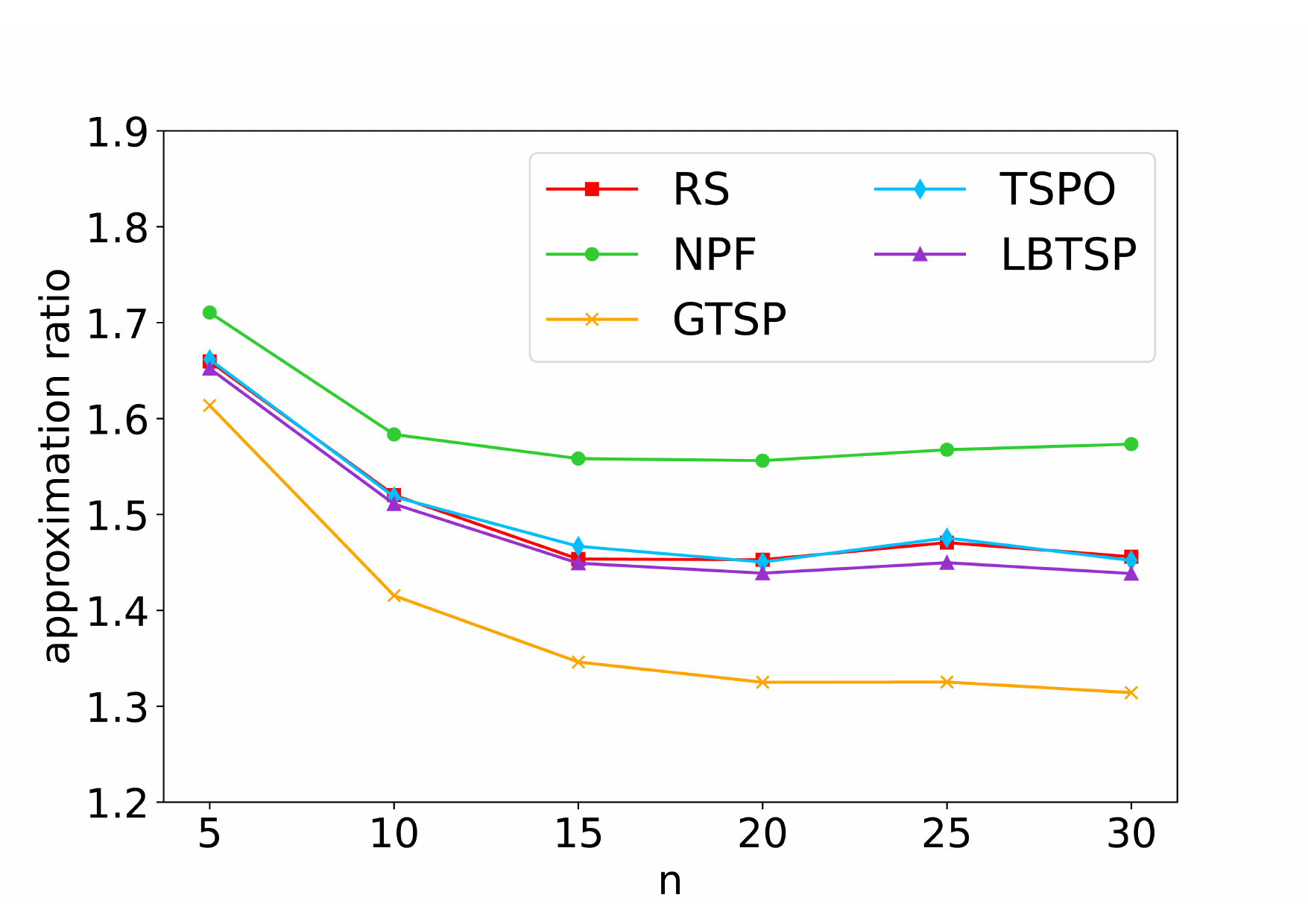}
        \captionsetup{justification=centering}
        \caption{observation range = 4}
        \label{fig:eff_n_4}
    \end{subfigure}
    \begin{subfigure}{0.32\textwidth}
        \centering
        \includegraphics[width=\linewidth]{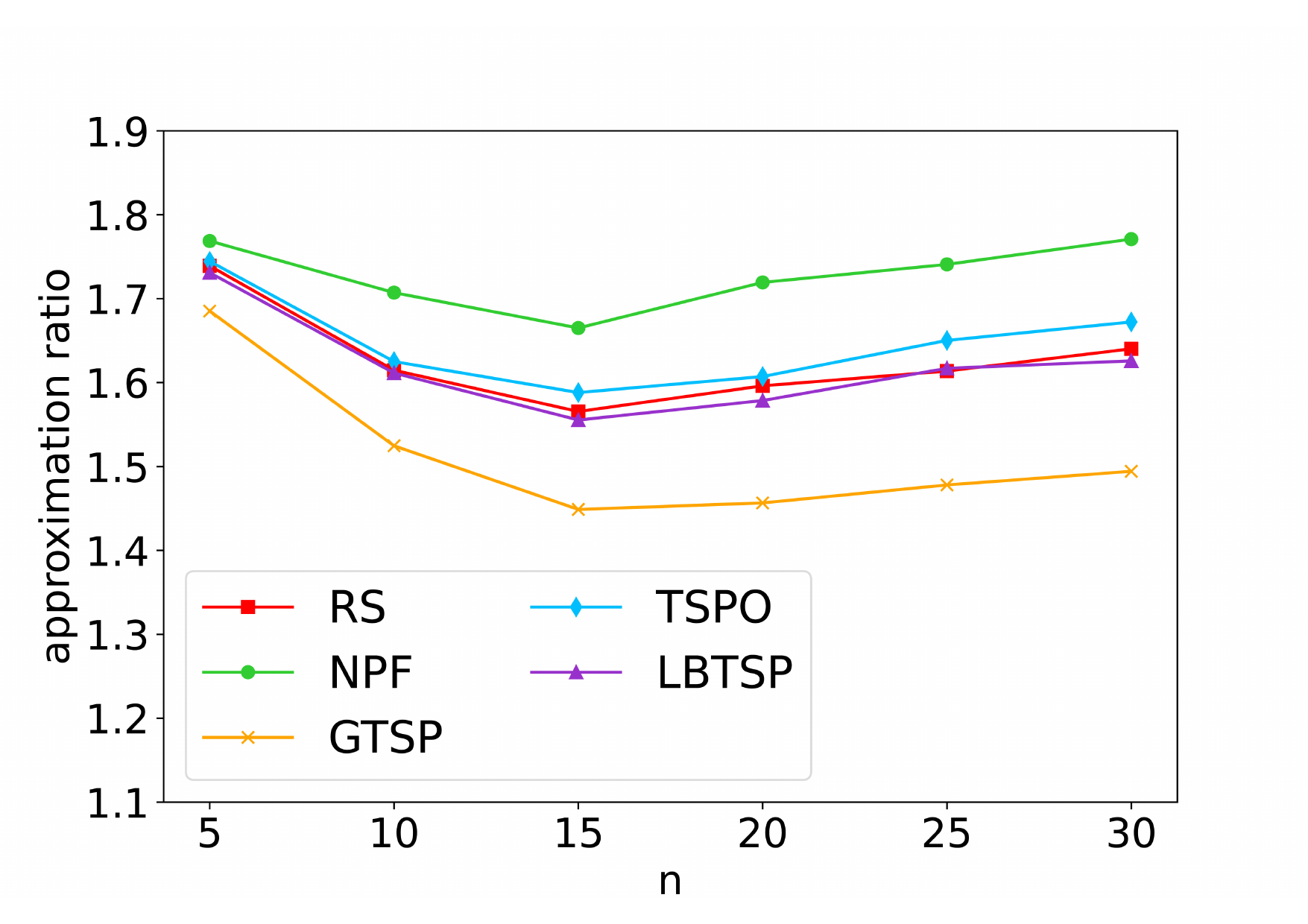}
        \captionsetup{justification=centering}
        \caption{observation range = 8}
        \label{fig:eff_n_8}
    \end{subfigure}
    \begin{subfigure}{0.33\textwidth}
        \centering
        \includegraphics[width=\linewidth]{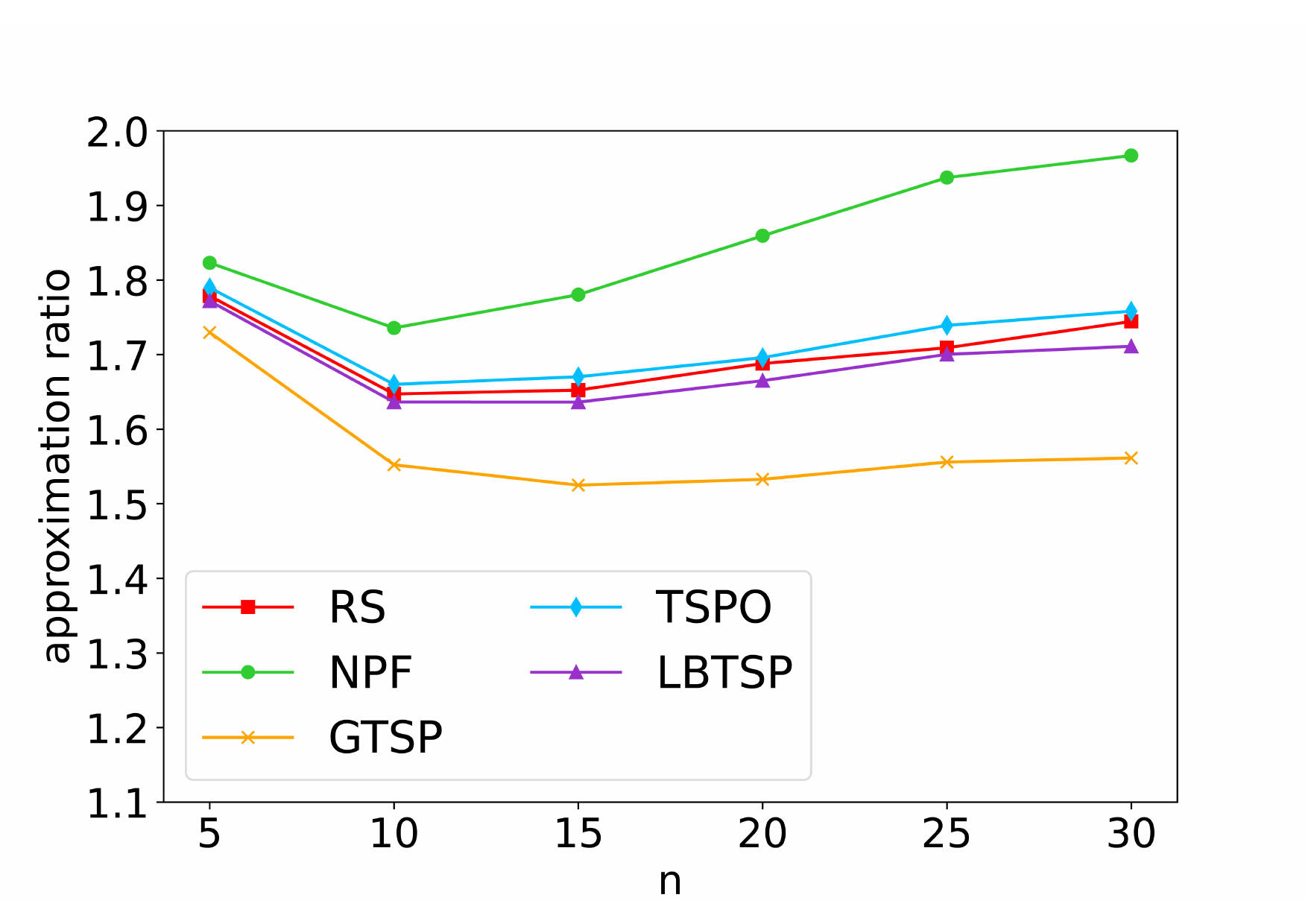}
        \captionsetup{justification=centering}
        \caption{observation range = 12}
        \label{fig:eff_n_12}
    \end{subfigure}
\end{minipage}
\captionsetup{justification=centering}
\caption{Effect of object number with different observation range}
\label{fig:effect_of_n}
\end{figure*}
\vspace{1.5em}

\begin{figure*}[ht]
\centering
\begin{minipage}[b]{1\linewidth}
    \begin{subfigure}{0.32\textwidth}
        \centering
        \includegraphics[width=\linewidth]{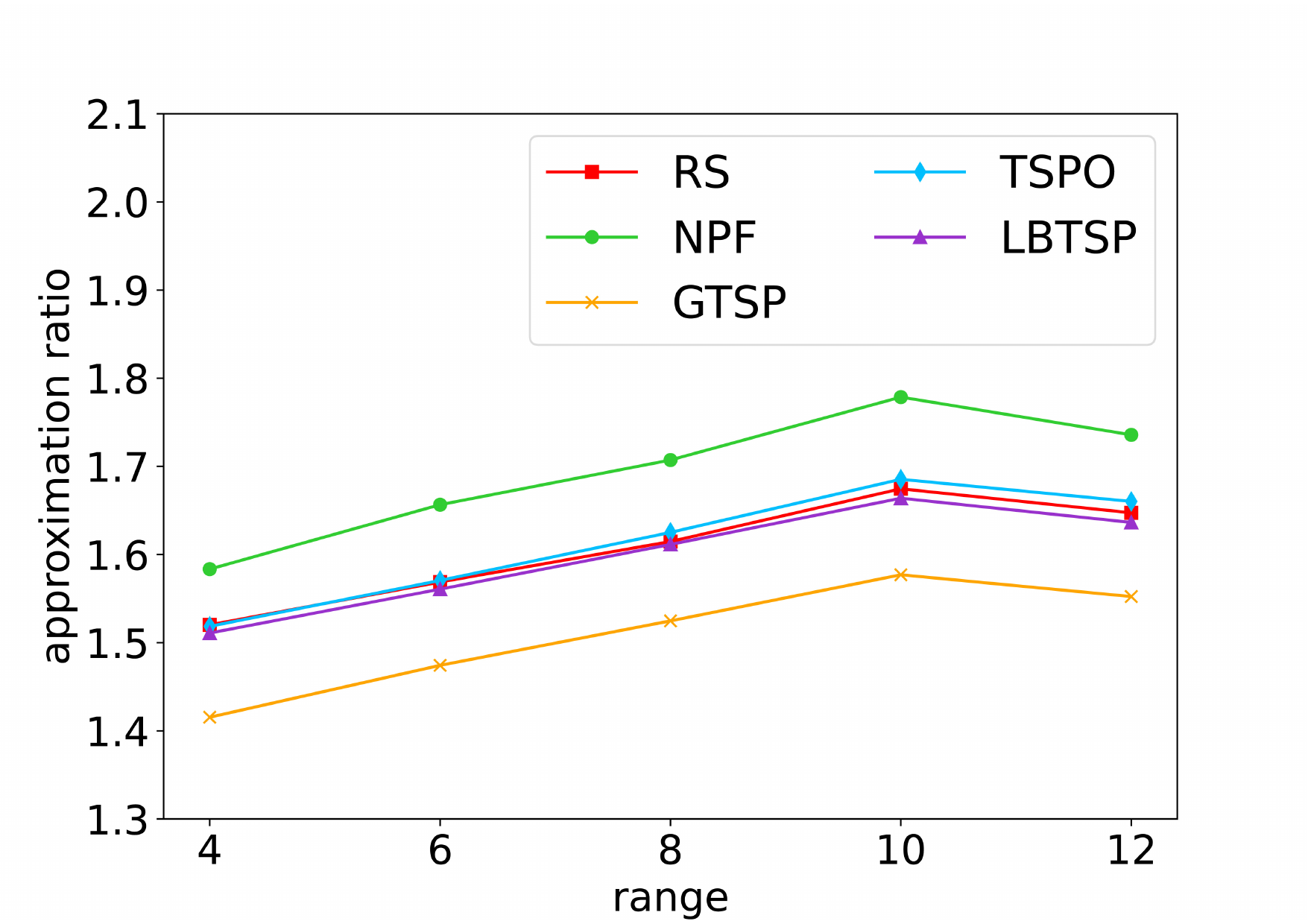}
        \captionsetup{justification=centering}
        \caption{object number = 10}
        \label{fig:eff_range_10}
    \end{subfigure}
    \begin{subfigure}{0.32\textwidth}
        \centering
        \includegraphics[width=\linewidth]{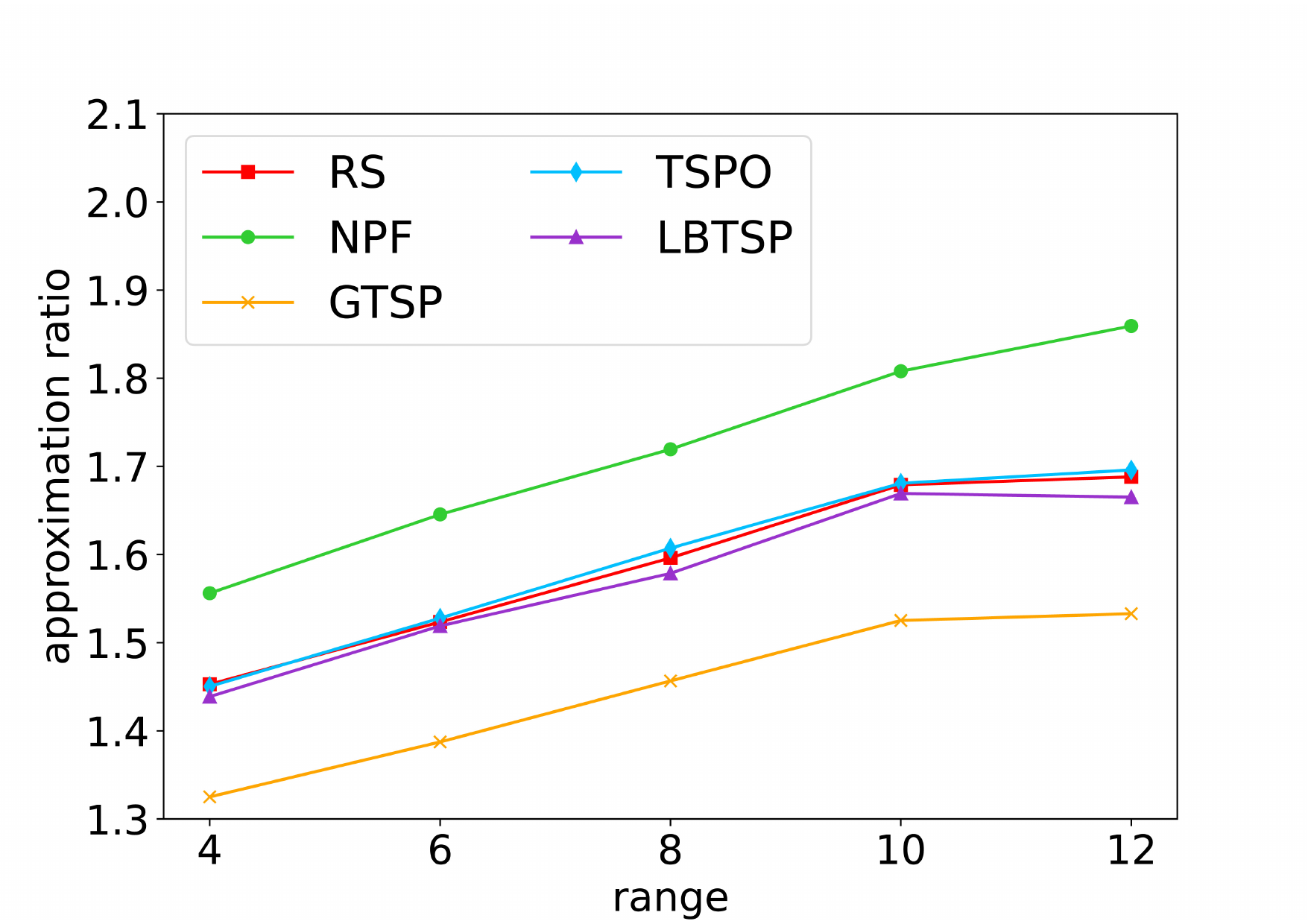}
        \captionsetup{justification=centering}
        \caption{object number = 20}
        \label{fig:eff_range_20}
    \end{subfigure}
    \begin{subfigure}{0.33\textwidth}
        \centering
        \includegraphics[width=\linewidth]{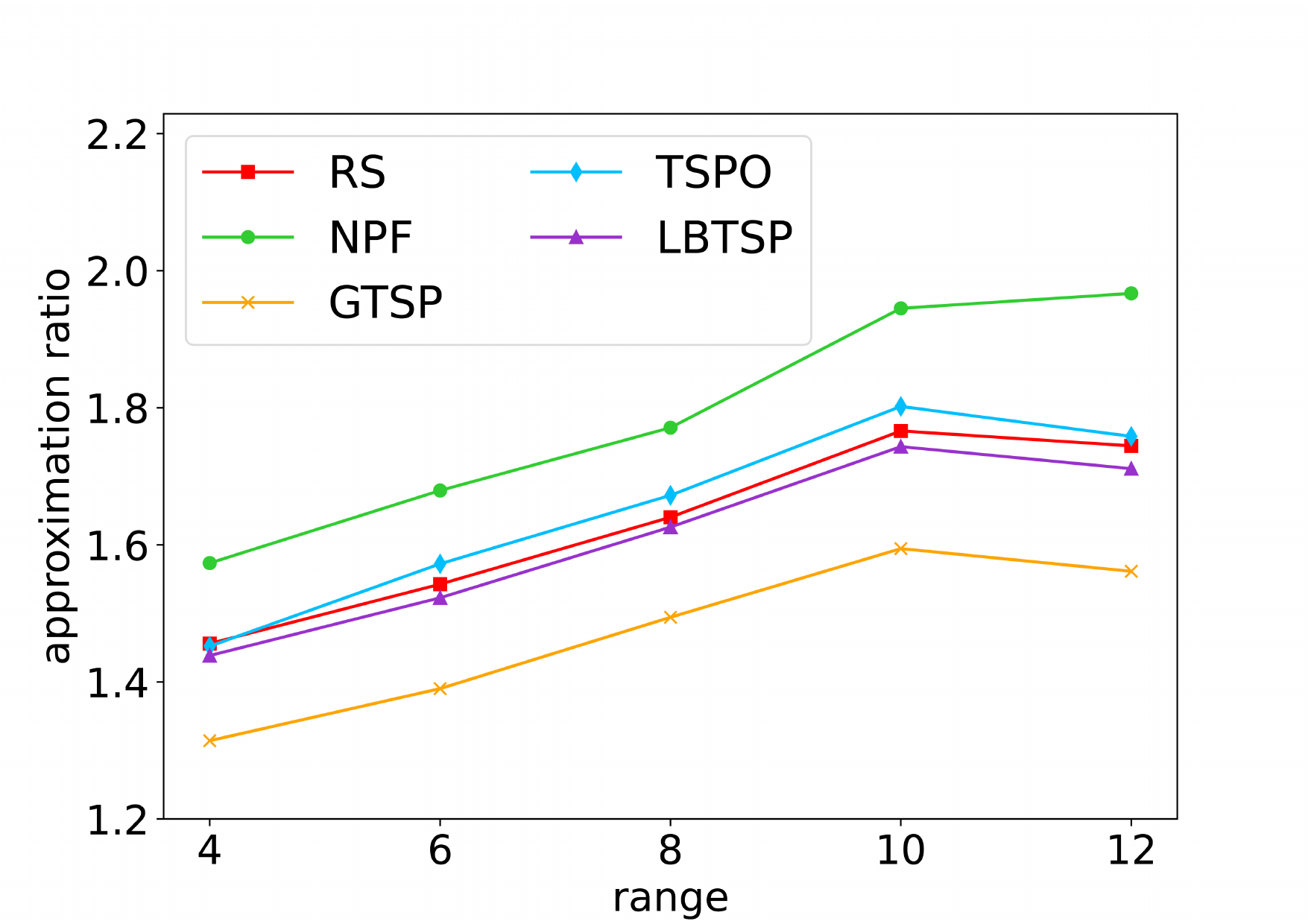}
        \captionsetup{justification=centering}
        \caption{object number = 30}
        \label{fig:eff_range_30}
    \end{subfigure}
\end{minipage}
\captionsetup{justification=centering}
\caption{Effect of observation range with different object number}
\label{fig:effect_of_range}
\end{figure*}
When $d_{max} = 4$, the overall approximation ratio has a declining tendency with 1.61 when $n$ is 5 and 1.31 when $n$ is 30 for the GTSP method. Other order determination methods produce similar results, and the relative superiority is shown to be the same as that in the brute force section. NPF achieves 1.71 approximation ratio when $n$ is 5 which gradually declines to 1.58 when $n$ is 30. The results bear some differences for larger observation ranges. When the range is 8 and 12, the approximation ratios both undergo a decline when $n$ is smaller than 15 and then experience a rebound. Concretely, for cases where the range equals 8, The GTSP methods achieves the approximation ratio of 1.69 when $n$ is 5, which gradually descends to 1.45 when $n$ is 15. The ratio then rebounds to 1.49. When $d_{max}$ is 12, the approximation ratio ranges from 1.51 to 1.98, slightly higher than the overall situation in $d_{max}=8$. The performance of NPF method seems to be susceptible to the growing $n$ but the other method remains relatively stable. In another word, our algorithms are robust to the growing number of objects.

We then analyse the effect of observation range. The number of objects is selected to be respectively 10, 20, and 30. It is apparent to notice the growing trend in approximation ratio with the rise in the observation range from 4 to 10 for all $n$. The ratio then remains stable when the range further increases to 12. When $n$ is 10, GTSP method achieves the approximation ratio ranging from 1.41 to 1.59, roughly 0.1 or 0.2 smaller than the other methods. The results bear a resemblance for large $n$, in which the ratio increases from 1.32 to 1.52 when $n$ is 20 and 1.32 to 1.6 when $n$ is 30. As a brief summary, there is a modest increase in approximation ratio along with the observation range and number of objects, in that the algorithms probably have slight difficulty in solving complex cases.

\subsubsection{Effectiveness of dynamic programming algorithm}
According to the experimental results of the previous two sections, it can be seen that the dynamic programming algorithm can achieve optimal performance combined with the GTSP observation order. This part of the experiment compares our algorithm with 4 baseline algorithms. The baseline methods are: Random Selection (RS), Nearest Point First (NPF)~\cite{wtsc}, Generalized Traveling Salesman Problem (GTSP)~\cite{GTSP}, and Maximum Observation Quality Path (MaxQ). Since the TSPO and LBTSP methods only generate the order of observed objects rather than a feasible path, they are not used as baseline algorithms for comparison in this part.

We have counted the percentage of cases in which RS, NPF, and GTSP methods can achieve the quality requirement without the assistance from our dynamic programming algorithm, the effectiveness and the necessity of which is to be explored. TSPO and LBTSP methods are omitted since they only yield the visiting order instead of feasible paths. For those cases in which the order determination methods have already been capable of achieving the quality requirement, we calculate the percentage by which the dynamic programming algorithm is able to reduce the path lengths.

The number of objects is selected to be 5, 15, and 25. As the quality requirement grows, the general tendency is extremely in accordance with the expectation. Concretely, the satisfying percent of all the three methods remains high when $q^* \leq 0.5$, with GTSP order performing slightly poorly than the others. For stricter $q^*$, however, the percentage drops rapidly to $20\%$ and finally to 0, which is given by NPF method when $q^*$ is 0.9. Based on the outcomes above that the order determination algorithms can barely find satisfying paths for large $q^*$, it is necessary to use dynamic programming to improve the path in order to fulfill the quality requirement. 

As shown in Figure~\ref{fig:path_length_of_algorithms}, our algorithm solves a shorter path than the baselines. Since the baseline algorithms did not consider observation quality, the paths are presented as straight lines in the figure. Compared to other algorithms, the algorithm proposed in this study can obtain significantly shorter paths. For example, when $n$ is 5, the average path length of the baseline algorithm RS is about 521m, NPF is 524m, and GTSP is about 509m, while the average path length of the algorithm proposed in this study is only 491m, and it can also meet the perceptual quality constraint. When seeking relatively higher perceptual quality, there is a trade-off between perceptual quality and path length. Compared to the MaxQ path in the figure, the algorithm in this study can search for a path with $90\%$ of the maximum perceptual quality but with a significantly shorter path length, reducing the original path by about 79m.
\begin{figure*}[ht]
\centering
\begin{minipage}[b]{1\linewidth}
    \begin{subfigure}{0.32\textwidth}
        \centering
        \includegraphics[width=\linewidth]{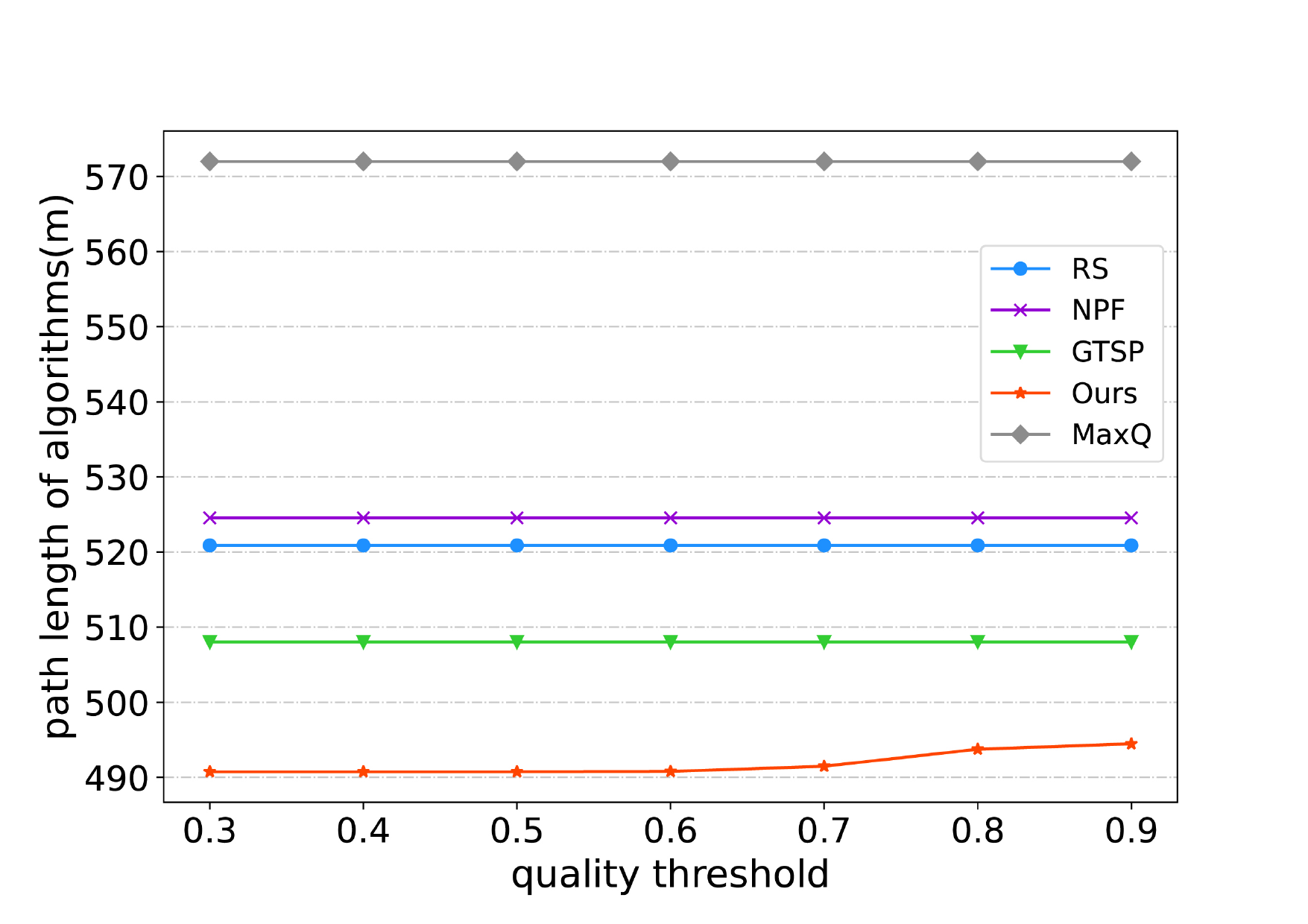}
        \captionsetup{justification=centering}
        \caption{object number = 5}
        \label{fig:pathlength-5}
    \end{subfigure}
    \begin{subfigure}{0.32\textwidth}
        \centering
        \includegraphics[width=\linewidth]{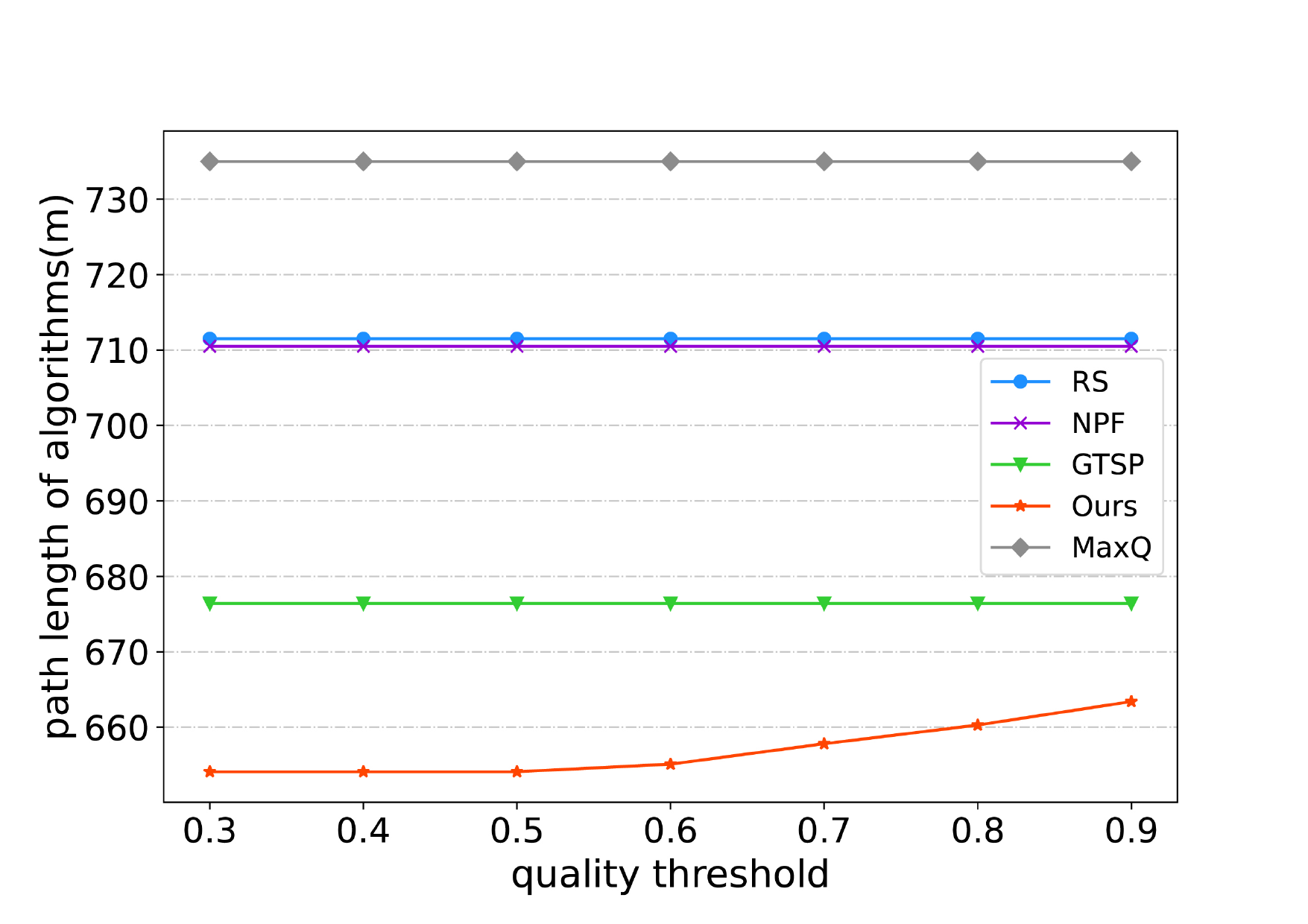}
        \captionsetup{justification=centering}
        \caption{object number = 15}
        \label{fig:pathlength-15}
    \end{subfigure}
    \begin{subfigure}{0.32\textwidth}
        \centering
        \includegraphics[width=\linewidth]{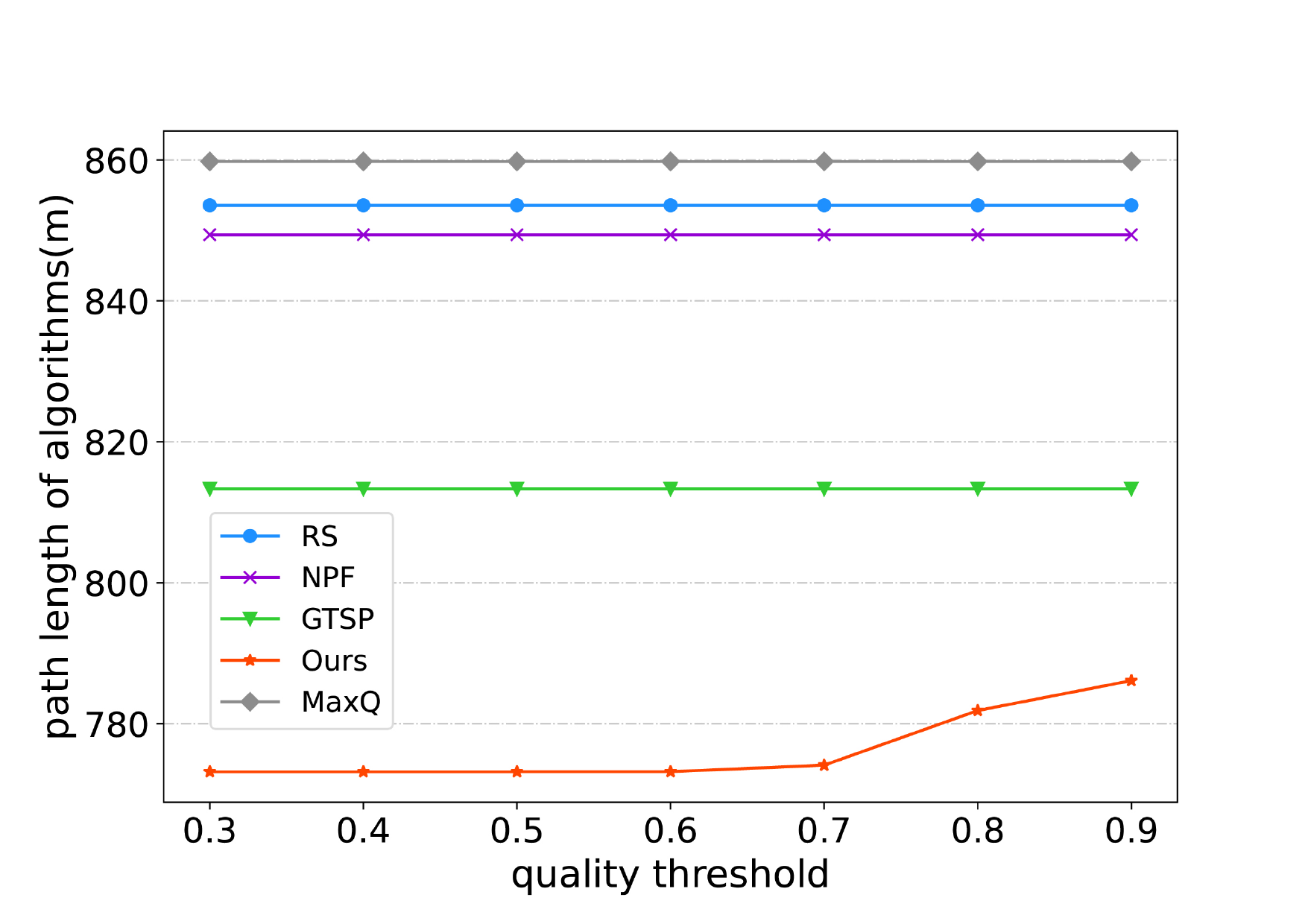}
        \captionsetup{justification=centering}
        \caption{object number = 25}
        \label{fig:pathlength-25}
    \end{subfigure}
\end{minipage}
\captionsetup{justification=centering}
\caption{Path length of algorithms under different observation quality constraints}
\label{fig:path_length_of_algorithms}
\end{figure*}

\subsubsection{Simulation results}

The results of the first simulation experiments are given first. In Figure~\ref{fig:simu_path_length}, we compare the path lengths of our algorithms with that of the baselines. The cases from RS, NPF, GTSP are all divided into two parts, based on the metrics that whether a path has already achieved the quality requirements. Figure~\ref{fig:reduced_path_length} presents the percentage of path lengths that our algorithm can reduce compared to the baseline algorithm.

\begin{figure*}[ht]
\centering
\begin{minipage}[b]{1\linewidth}
    \begin{subfigure}{0.32\textwidth}
        \centering
        \includegraphics[width=\linewidth]{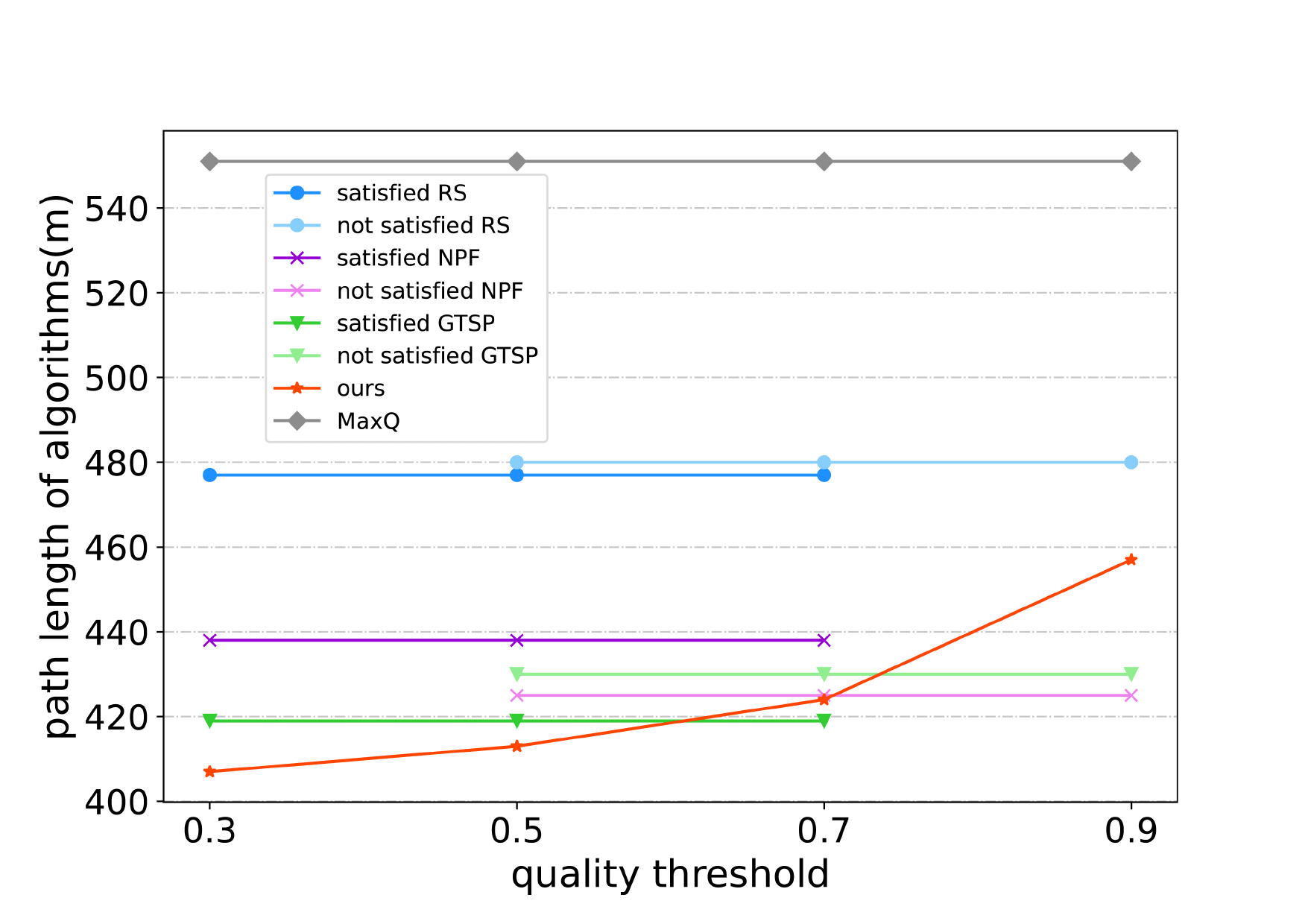}
        \captionsetup{justification=centering}
        \caption{path length, $n$ = 5}
        \label{fig:simu_pathlength5}
    \end{subfigure}
    \begin{subfigure}{0.32\textwidth}
        \centering
        \includegraphics[width=\linewidth]{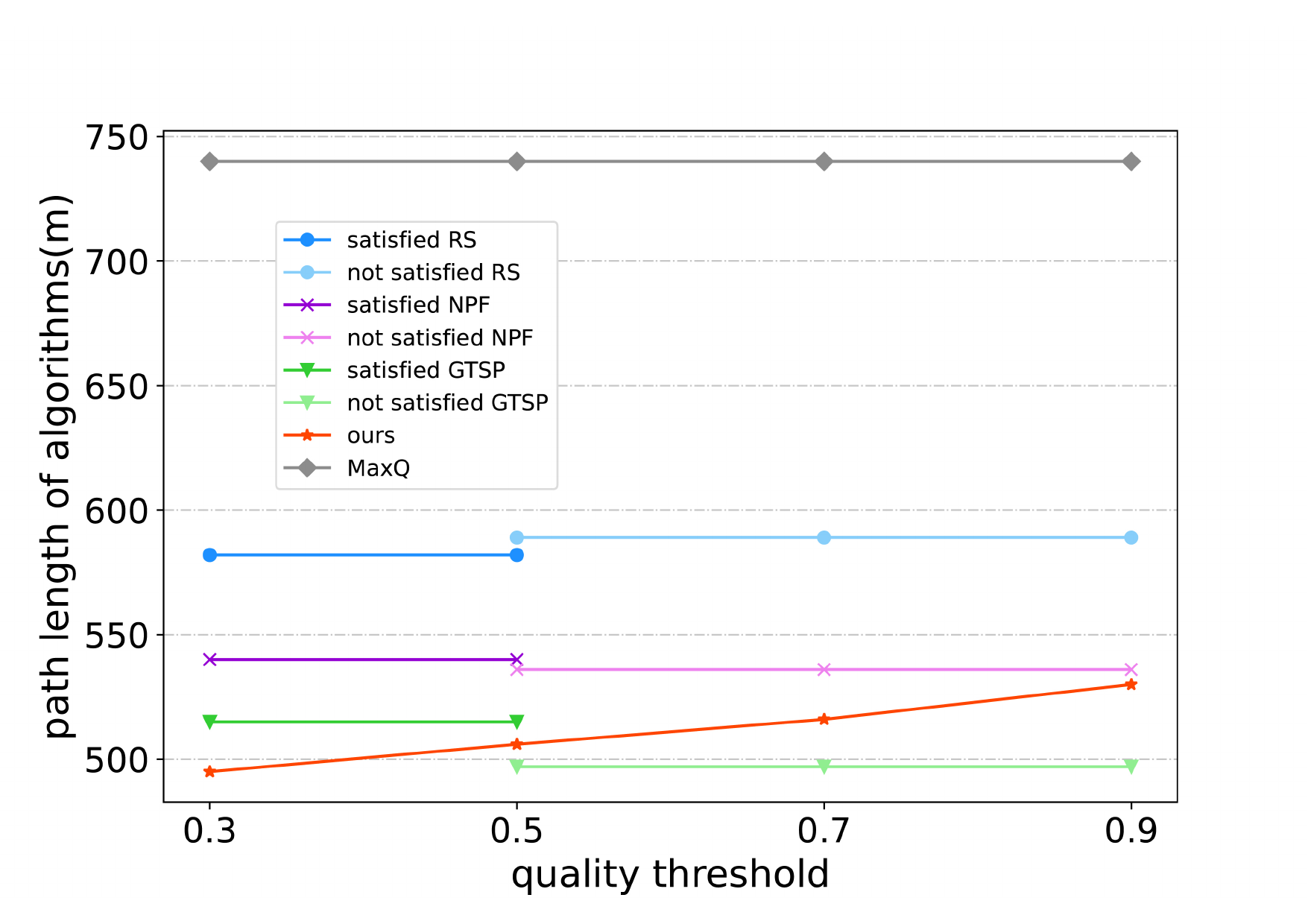}
        \captionsetup{justification=centering}
        \caption{path length, $n$ = 10}
        \label{fig:simu_pathlength10}
    \end{subfigure}
    \begin{subfigure}{0.32\textwidth}
        \centering
        \includegraphics[width=\linewidth]{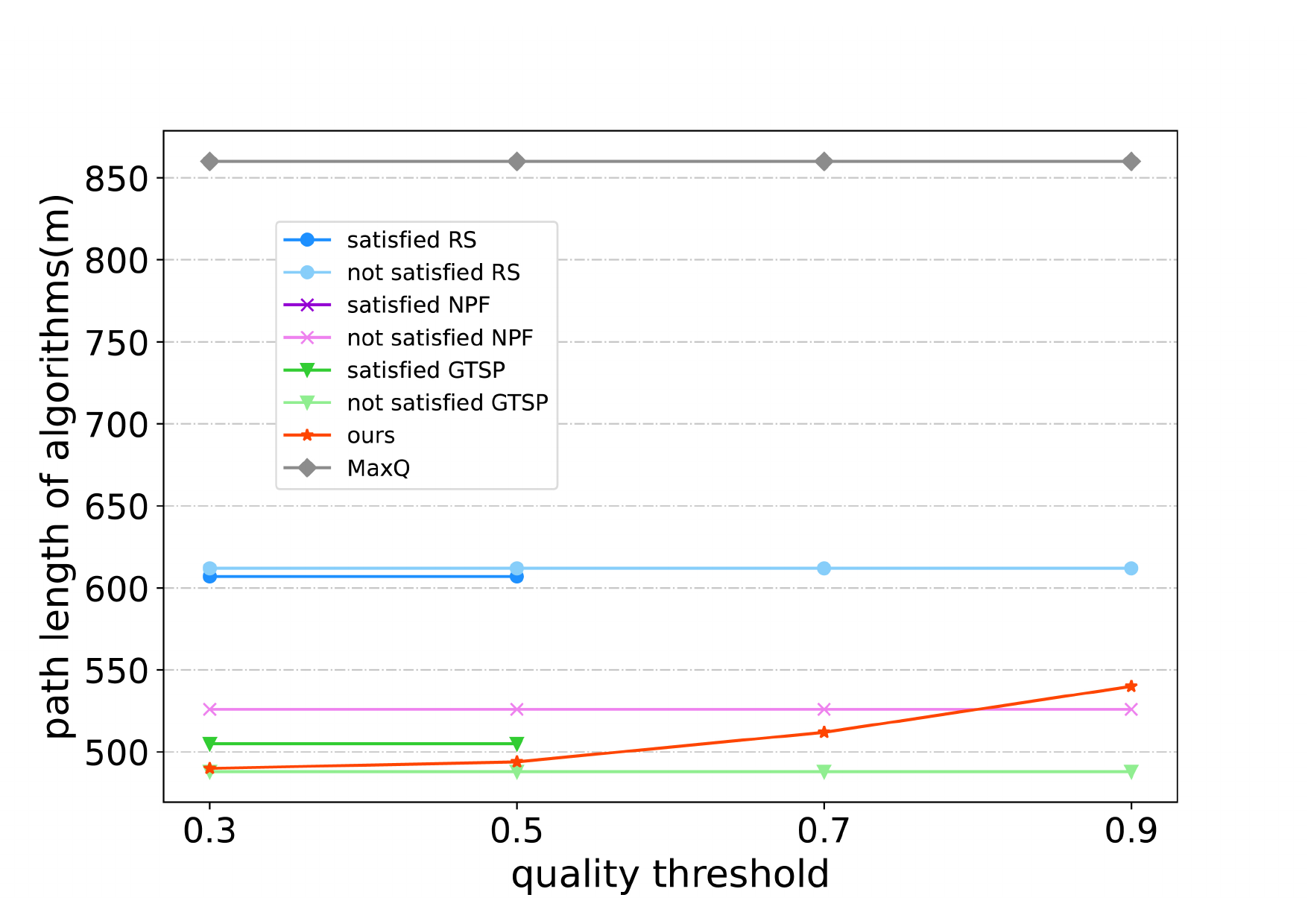}
        \captionsetup{justification=centering}
        \caption{path length, $n$ = 20}
        \label{fig:simu_pathlength20}
    \end{subfigure}
\end{minipage}
\captionsetup{justification=centering}
\caption{Path length of algorithms in the cylinder situation under different observation quality constraints}
\label{fig:simu_path_length}
\end{figure*}

\begin{figure*}[ht]
\centering
\begin{minipage}[b]{1\linewidth}
    \begin{subfigure}{0.32\textwidth}
        \centering
        \includegraphics[width=\linewidth]{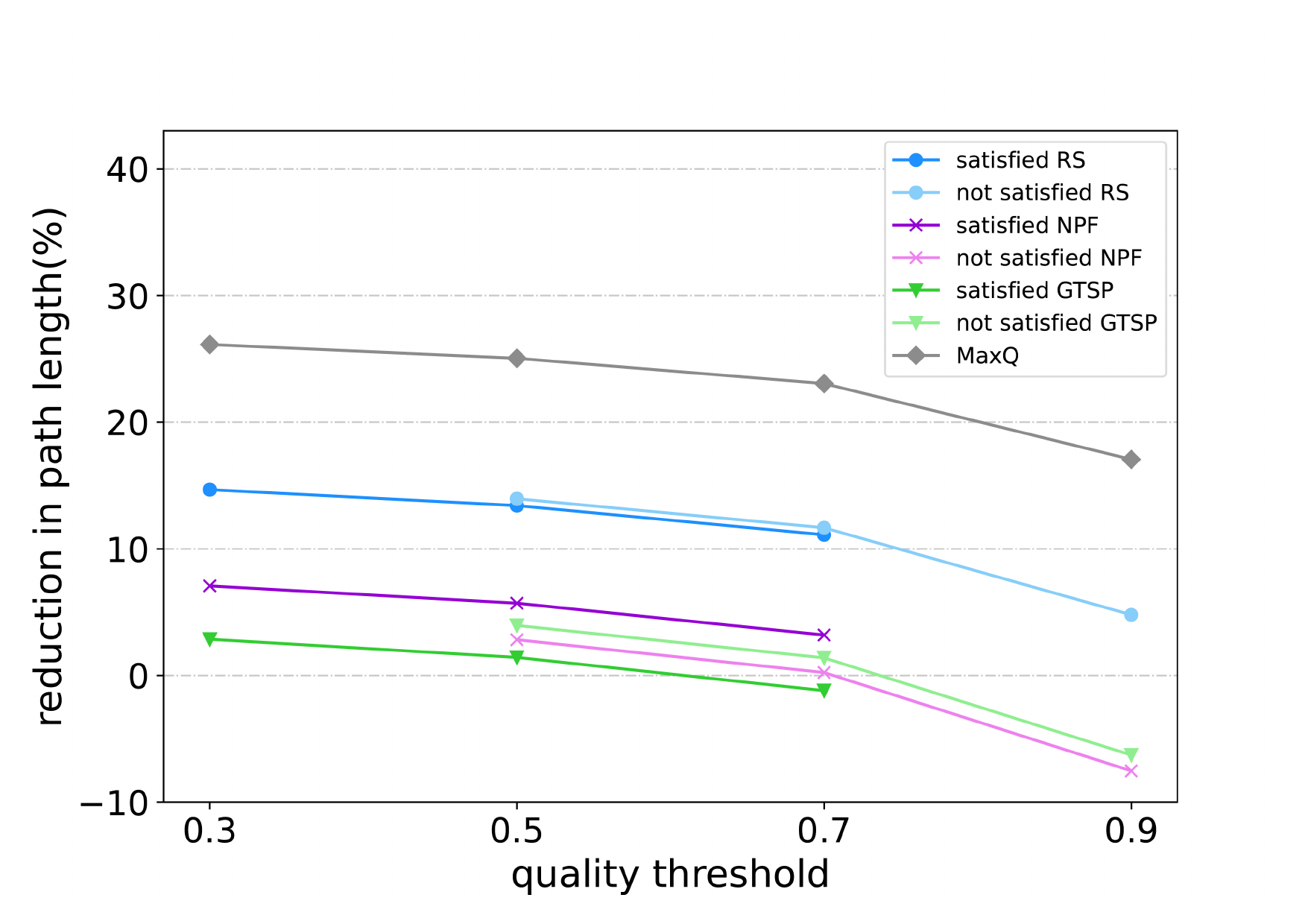}
        \captionsetup{justification=centering}
        \caption{path length, $n$ = 5}
        \label{fig:simu_reduced5}
    \end{subfigure}
    \begin{subfigure}{0.32\textwidth}
        \centering
        \includegraphics[width=\linewidth]{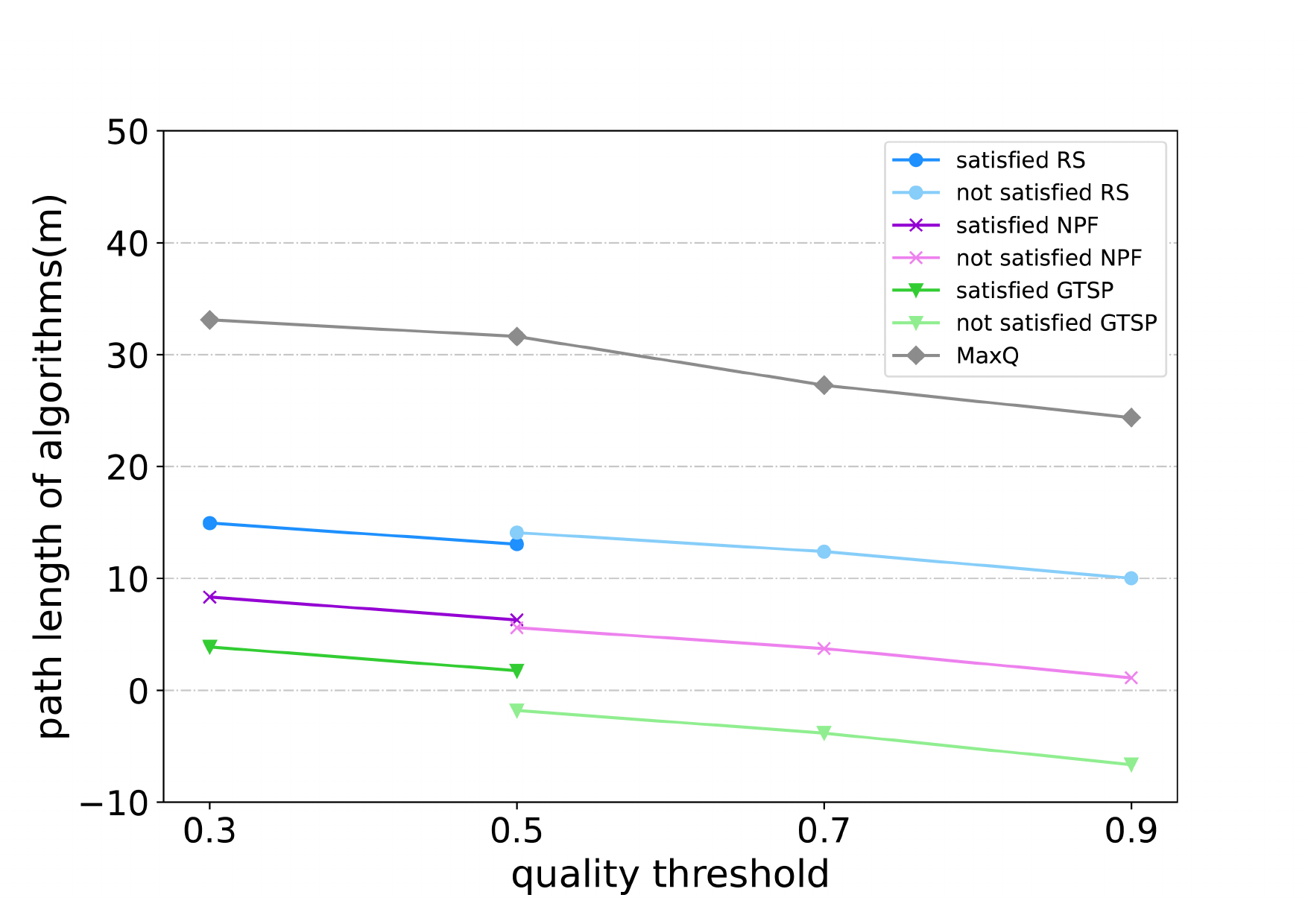}
        \captionsetup{justification=centering}
        \caption{path length, $n$ = 10}
        \label{fig:simu_reduced10}
    \end{subfigure}
    \begin{subfigure}{0.32\textwidth}
        \centering
        \includegraphics[width=\linewidth]{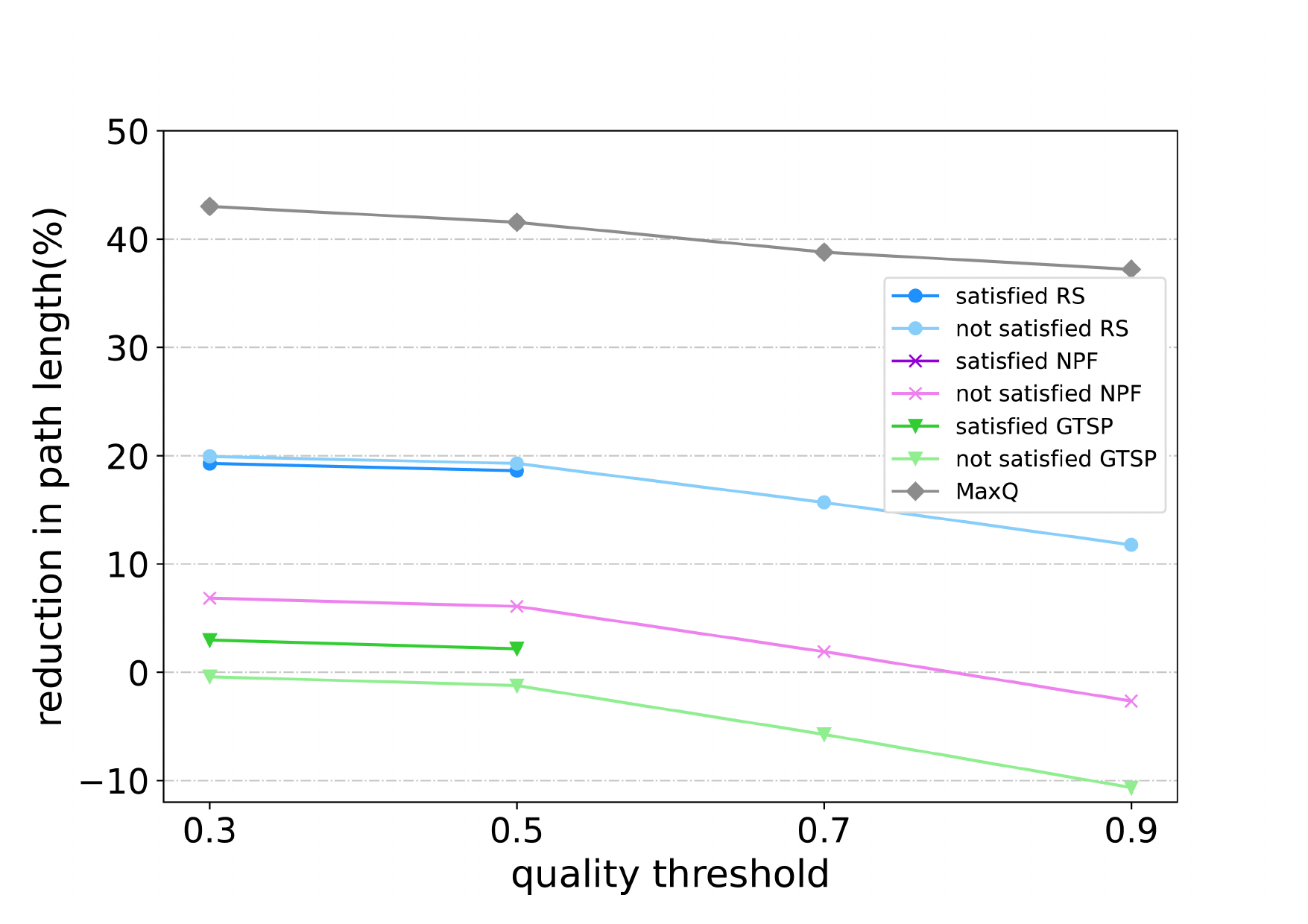}
        \captionsetup{justification=centering}
        \caption{path length, $n$ = 20}
        \label{fig:simu_reduced20}
    \end{subfigure}
\end{minipage}
\captionsetup{justification=centering}
\caption{Reduction percentage of baseline path length in the cylinder situation}
\label{fig:reduced_path_length}
\end{figure*}

It is apparent that the MaxQ path remains the longest and is significantly longer than other paths, with differences ranging from 100 m to 300 m. Since the RS, NPF, and GTSP algorithms do not consider the observation quality, they are represented as horizontal lines in Figure~\ref{fig:simu_path_length}. As $q^*$ increases, the path lengths of our algorithm gradually increase. When $n = 5$, the path length increases from 410 m to 459 m, When $n = 10$, it increases from approximately 490 m to 530 m. Correspondingly, the percentage of reduction in path length also gradually decreases. For baseline paths that already meet the observation quality requirements, our algorithm has a more pronounced effect in reducing the length of RS paths, about 15\%, compared to reductions of approximately 6\% for NPF paths and 3\% for GTSP paths. In cases where the baseline path does not attain the observation quality, our algorithm still functions to a lesser extent, regardless of the negative values for larger $q^*$.
This indicate that our algorithm compensates or the deficiencies in observation quality of the baseline paths at a modest cost of less than 10\%. Compared to the MaxQ path, when $n = 5$, the reduction percentage further increases to about 25\%, and for larger $n$, it ranges from 30\% to 40\%.

The average word recognition accuracy is shown in Figure~\ref{fig:simu_aver_recog_accur} for each algorithm at different quality thresholds. The straight and horizontal lines at the top of each subplot represent the accuracy of the maximum quality paths (MaxQ). This value, $\eta$, does not reach 1 because there probably exists oscillation or other noise factors that may exert influence on photos taken and OCR processes. However, when comparing the average recognition accuracy of each of algorithms with the maximum one, the algorithms' accuracy is extremely close to the product of maximum accuracy and the corresponding quality threshold. As an example, when $q^*$ is 0.9, the algorithms' outcomes are around 0.8 which is approximately $\eta*q^*$.

\begin{figure*}[ht]
\centering
\begin{minipage}[b]{1\linewidth}
    \begin{subfigure}{0.32\textwidth}
        \centering
        \includegraphics[width=\linewidth]{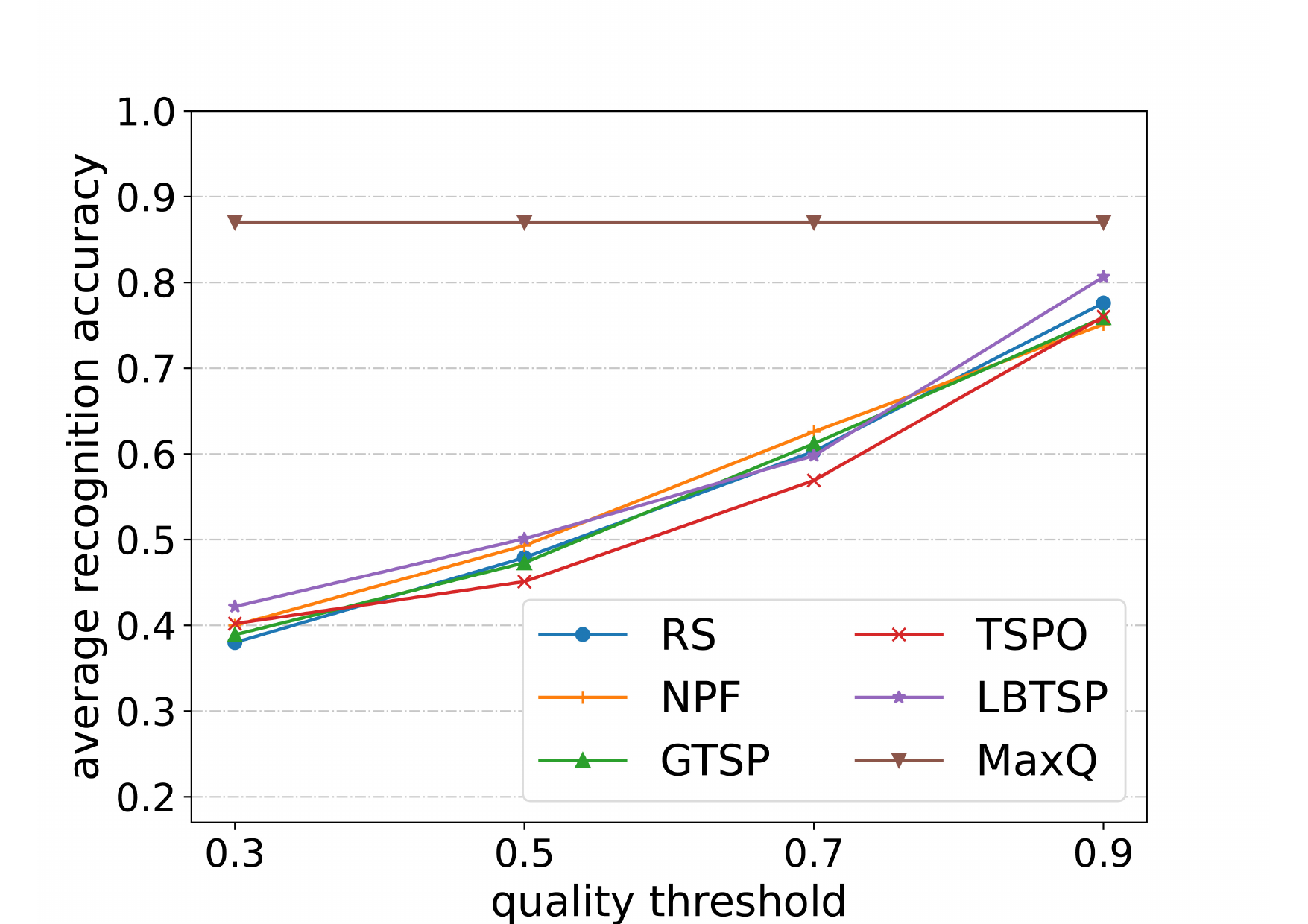}
        \captionsetup{justification=centering}
        \caption{object number = 5}
        \label{fig:aver_5}
    \end{subfigure}
    \begin{subfigure}{0.32\textwidth}
        \centering
        \includegraphics[width=\linewidth]{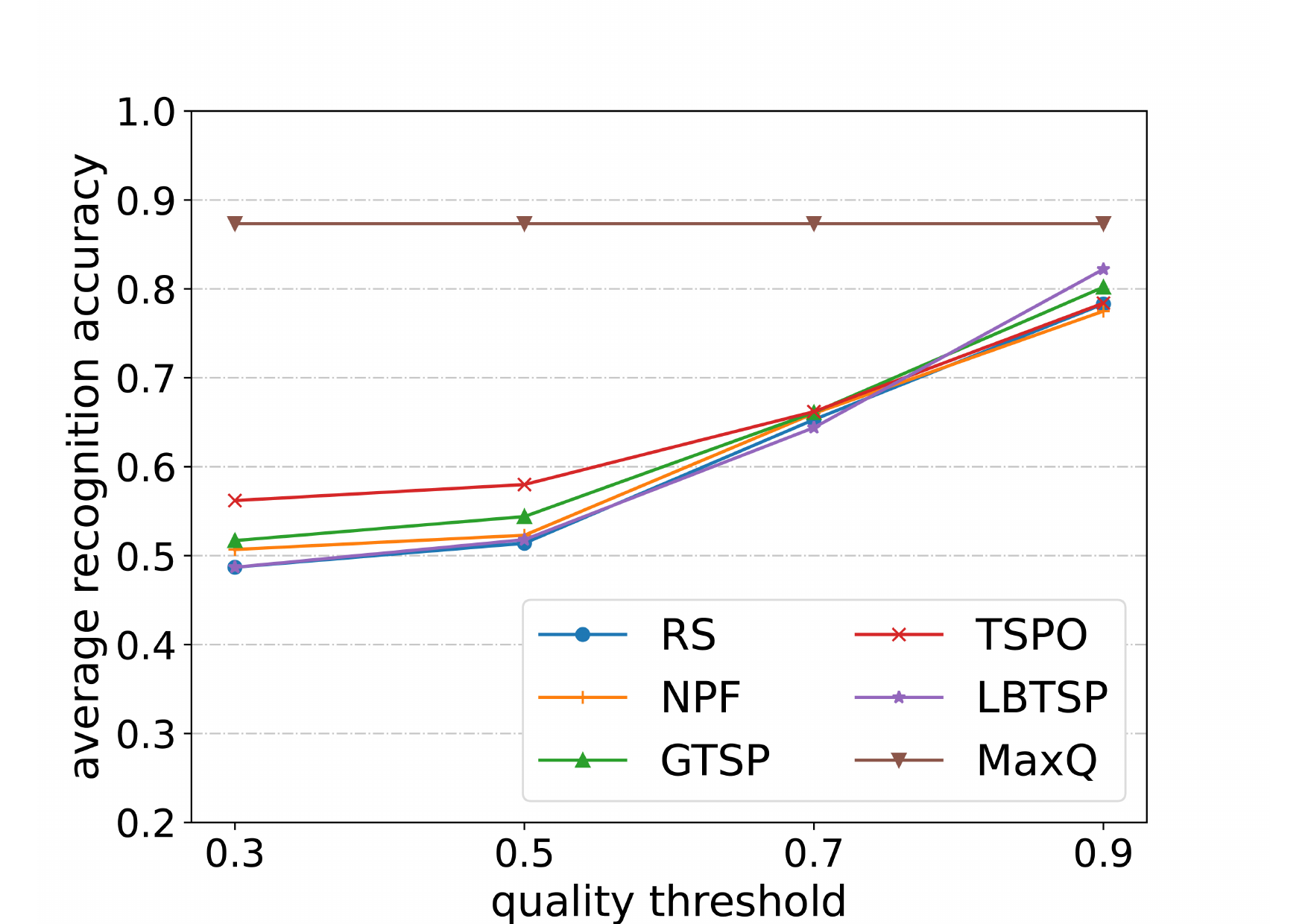}
        \captionsetup{justification=centering}
        \caption{object number = 10}
        \label{fig:aver_10}
    \end{subfigure}
    \begin{subfigure}{0.32\textwidth}
        \centering
        \includegraphics[width=\linewidth]{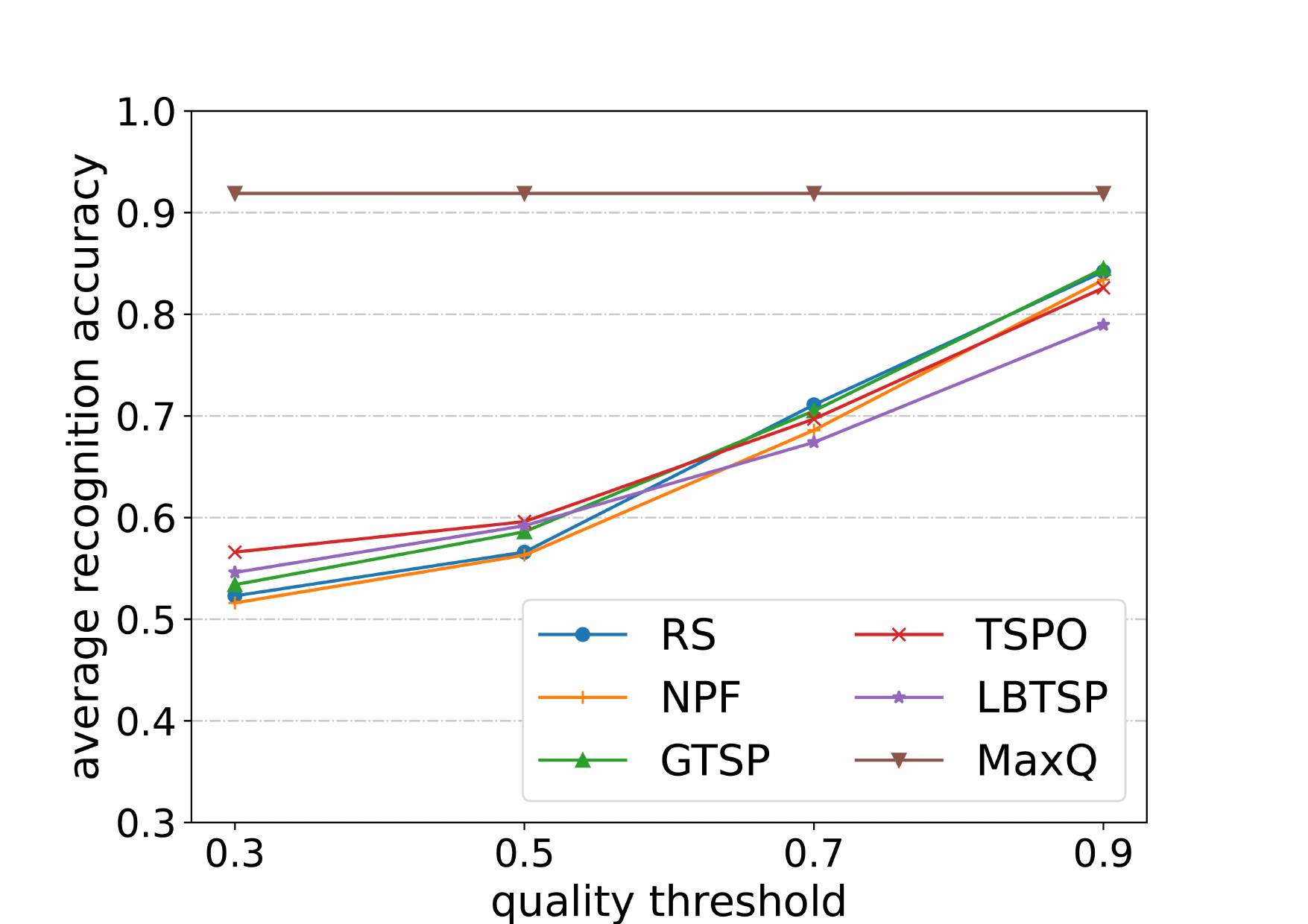}
        \captionsetup{justification=centering}
        \caption{object number = 20}
        \label{fig:aver_20}
    \end{subfigure}
\end{minipage}
\captionsetup{justification=centering}
\caption{Average recognition accuracy at different quality thresholds and object numbers}
\label{fig:simu_aver_recog_accur}
\end{figure*}

The simulation results in the virtual city environment are shown in Figure~\ref{fig:real-city-results}. The trends in path length and reduction ratios exhibit clear similarities with those discussed above. The reduction on the initial path length for RS is the highest, approximately 20\%. In comparison, the reduction in NPF paths are around 7\%, and about 5\% on GTSP paths. Even with high-quality requirements of $q^*=0.9$, our algorithm increases the path length by only about 5\% to meet the observation quality constraint (still achieving 10\% on RS paths). Conversely, the reduction percentage on MaxQ path lengths dramatically increases to around 40\%. Furthermore, Figure~\ref{fig:accu_real_city} shows that the average recognition accuracy is slightly higher than the results in the cylinder environment. The accuracy is approximately 65\% when $q^* = 0.3$, and gradually rises to about 85\% when $q^* = 0.9$. The reason is that objects in the 3D city environments are predominantly planar, thus the impact of deviating angles on perception quality is diminished, making it generally easier to achieve higher recognition accuracy.

\begin{figure*}[ht]
\centering
\begin{minipage}[b]{1\linewidth}
	\subfloat[path length of algorithms]{\label{fig:pathlengthcity}
	\includegraphics[width=0.32\textwidth]{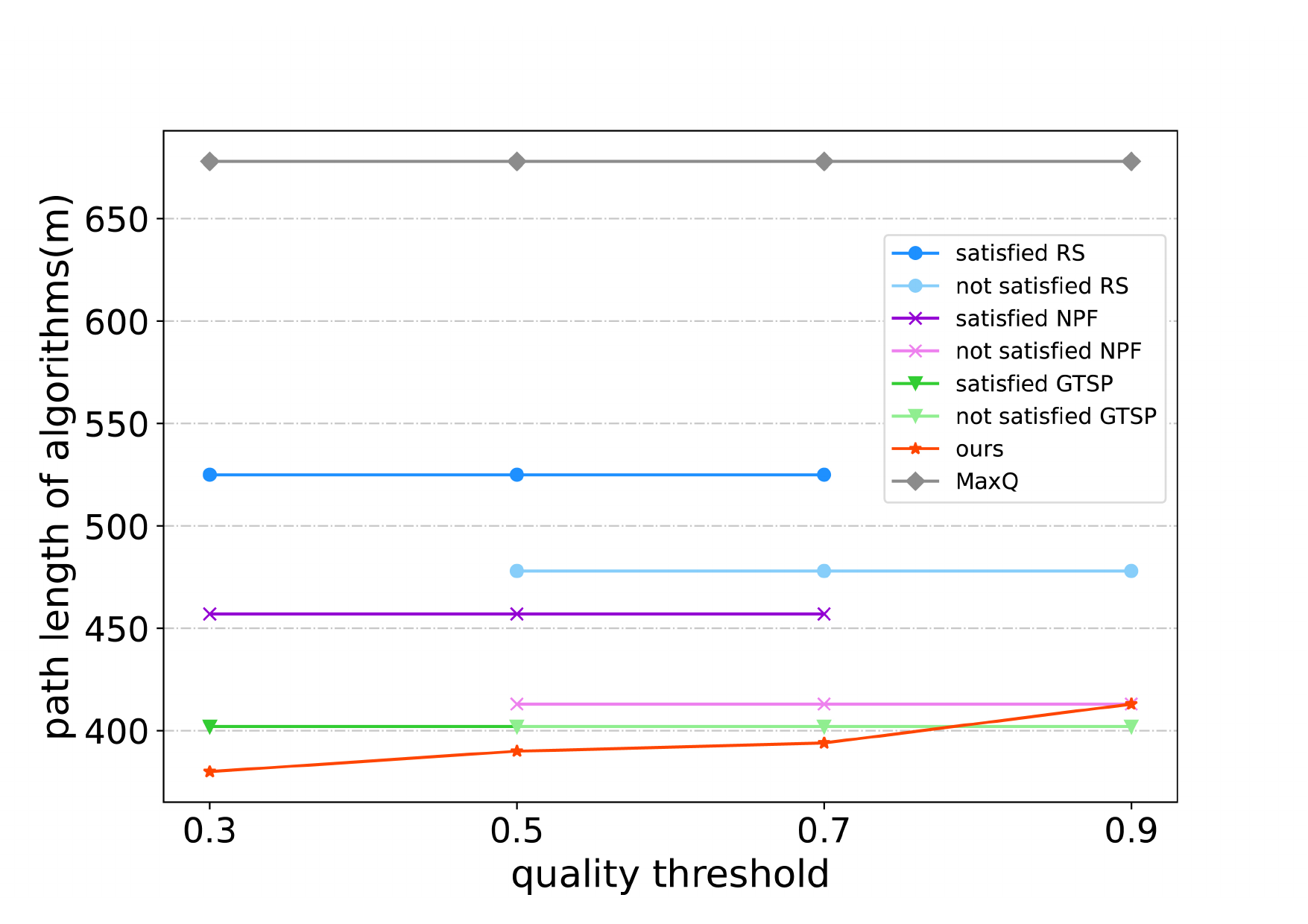}}
	\subfloat[reduction percentage of path length]{\label{fig:reducedcity}
	\includegraphics[width=0.32\textwidth]{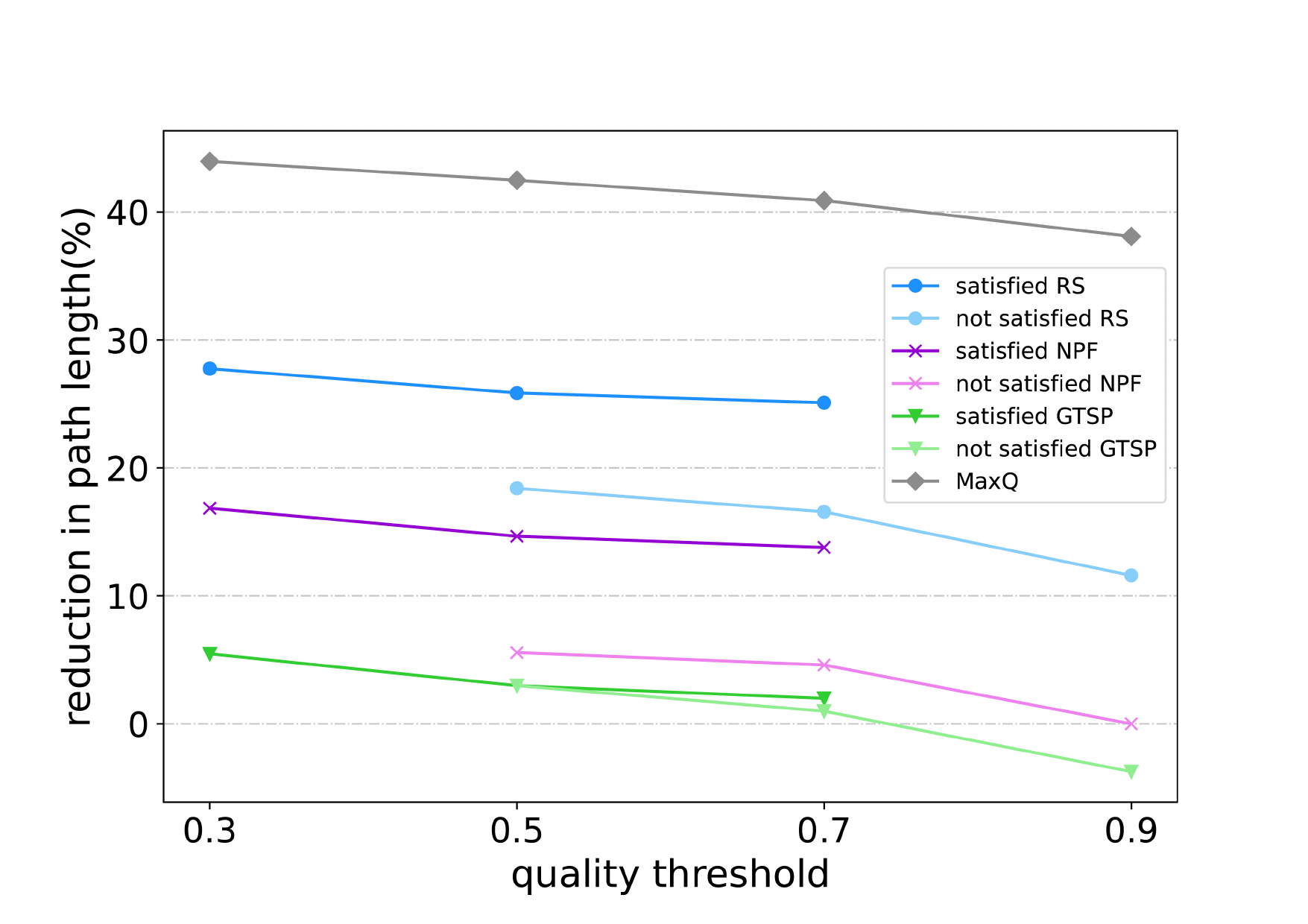}}
        \subfloat[average accuracy]{\label{fig:accu_real_city}
	\includegraphics[width=0.32\textwidth]{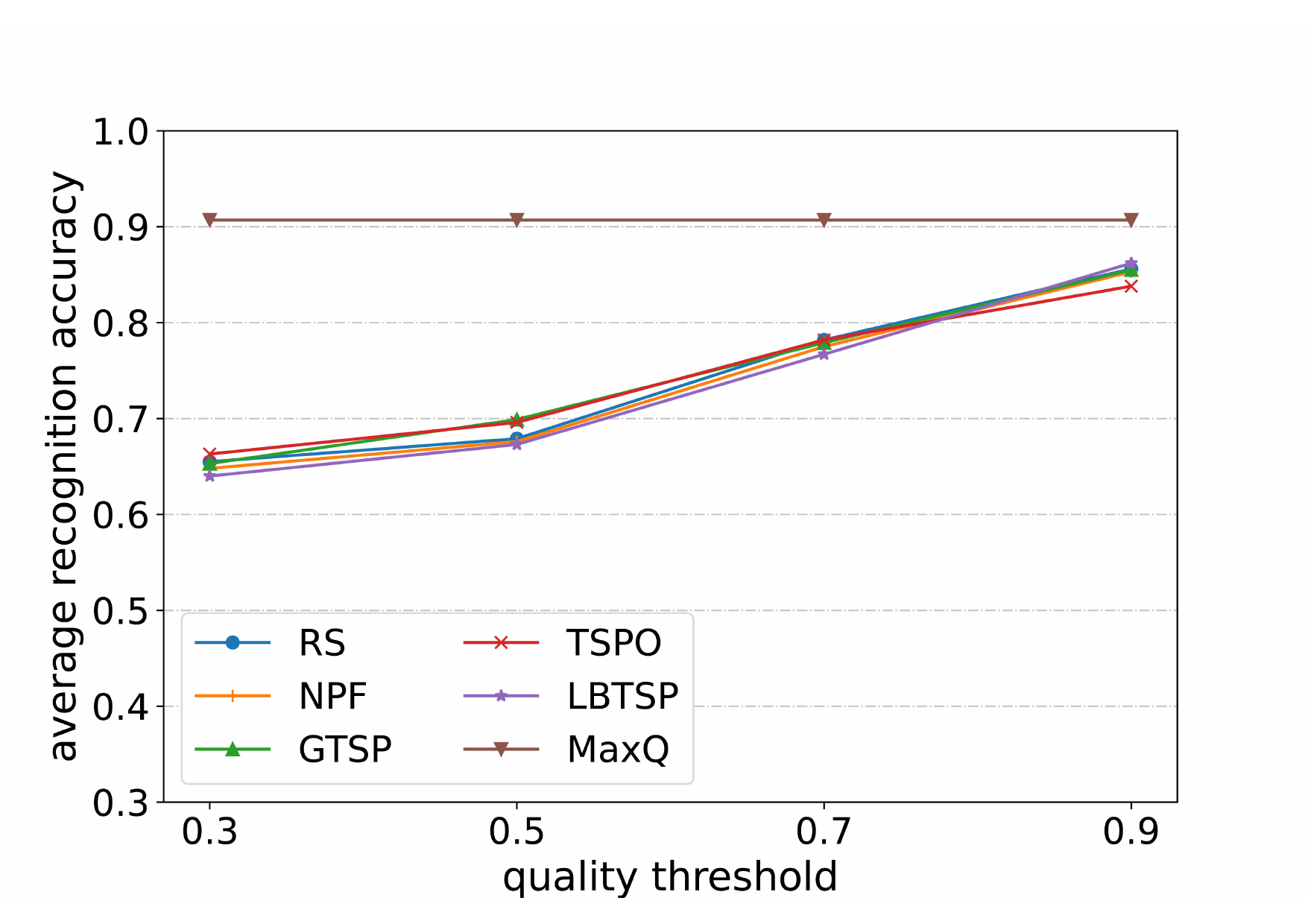}}
\end{minipage}
\caption{Path lengths of algorithms, reduction percentage and recognition accuracy in the 3D city environment}
\label{fig:real-city-results}
\end{figure*}

\subsubsection{Running time} \label{sec:runningtime}
In addition to the approximation ratio and the effectiveness of the dynamic programming algorithm, the average processing time is presented in Table~\ref{table:dp3time}. All of the data units are seconds. Generally, the time cost nearly doubles with the increase of $n$ after 10. When $n$ rises from 5 to 10, it reduces the discretization grid by half, and then the number of observation points of each object quadruples. As the data indicates, there is only minor difference in the running time among different visiting orders, owing to the fact that the running time is only related to the size of the dynamic programming table, which is unique since the problem complexity is the same.

\begin{table}[htb]
\centering
\caption{Running time for dynamic programming algorithm (in seconds), $\epsilon=0.5$}\label{table:dp3time}
\begin{tabular}{ccccccc}
\hline
n & 5  & 10  & 15  & 20 & 25  & 30\\ \hline
time & 14.26 & 211.98 &	723.59	& 1153.01 & 2906.32	& 6874.45\\ \hline
\end{tabular}
\end{table}

\subsection{Discussion}
In general, there is a slightly rising trend in approximation ratio with the increase in the number of objects and observation range. From Theorem~\ref{theorem}, it is clear that larger $\epsilon$ leads to higher approximation ratio, since the error in the grid increases and there are less observation points that can be visited. We suppose that for simple case, the effect of error in fewer visited observation points offset that of Theorem~\ref{theorem}.

Our dynamic programming algorithm can both help the original paths to meet the quality requirement and reduce the lengths of the qualified ones. Provided there are more overlapping observation areas of objects, it can be anticipated a more significant reduction rate on the lengths. In seeking a comparatively good observation quality, there is a worthwhile trade-off on the quality and the path length. The algorithm can seek the path that has $90\%$ observation quality but an evident reduction in the length cost. Based on these results, it can be drawn that GTSP paths yield the relatively best observation order because of the least improvement by the dynamic programming algorithm to achieve a certain quality threshold. Besides, GTSP paths also give the order that contribute to reduce the maximum quality path to the greatest extent, as compared with other heuristic paths. 

Apart from solving the above problem, our algorithm is also endowed with the capability to deal with the situation in which each object has a different weight and various observation quality demands. Suppose the weight of the objects to be ${w_1, w_2, ..., w_n}$ and the quality constraint ${q^*_1, q^*_2, ..., q^*_n}$. The generalization to different weight is achieved by multiplying $w_i$ to the observation quality at all observation points of $o_i$ before determining observing order and running the dynamic programming algorithm. We can eliminate the observation points of $o_i$ that do not yield the required quality $q^*_i$, leaving the rest observation points that all meet the condition. The elimination process can be adapted to different objects under $q^*_i$, hence the second generalization is achieved. Furthermore, in the presence of obstacles, the algorithm still works by adding the edges between the objects and observation points that are not blocked by the obstacles.

\section{Conclusion}
In this work, we focus on planning a path with the lowest length for an UAV that aims to observe a set of objects while ensuring a gross observation quality requirement. Each object is assumed to be a point and faces a certain direction, confining the efficient observation angle and distance. We present a $(1+\epsilon)$-approximation dynamic programming-based algorithm to search for the near optimal path in polynomial time after obtaining an observing order of objects. This is the first work that presents a path planning algorithm that considers the observation quality and has an approximation ratio. Numerical results show that our algorithm achieves near optimal results with the approximation ratio around 1.5 and its effectiveness is also validated in the Airsim simulator.

\bibliographystyle{elsarticle-num} 
\bibliography{biblio}

\end{document}